\newcommand{\R}{\mathbb{R}}
\newcommand{\E}{\mathbb{E}}
\newtheorem{definition}{Definition} 
\newtheorem*{remark}{Remark} 
\newtheorem{lemma}{Lemma} 
\newtheorem{theorem}{Theorem}
\newtheorem*{corollary}{Corollary}  
\pgfplotsset{compat=1.12}
\definecolor{crimson}{RGB}{220, 20, 60}
\definecolor{indigo}{RGB}{75, 0, 30}
\icmltitlerunning{On Characterizing GAN Convergence Through Proximal Duality Gap}
\begin{document}

\twocolumn[
\icmltitle{On Characterizing GAN Convergence Through Proximal Duality Gap}

\icmlkeywords{Machine Learning, ICML}
\begin{icmlauthorlist}
\icmlauthor{Sahil Sidheekh}{iit}
\icmlauthor{Aroof Aimen}{iit}
\icmlauthor{Narayanan C. Krishnan}{iit}
\end{icmlauthorlist}

\icmlaffiliation{iit}{Department of Computer Science, Indian Institute of Technology, Ropar, India}
\icmlcorrespondingauthor{Sahil Sidheekh}{2017csb1104@iitrpr.ac.in}
\icmlcorrespondingauthor{Narayanan C Krishnan}{ckn@iitrpr.ac.in}

\vskip 0.3in
]

\printAffiliationsAndNotice{}

\begin{abstract}
   Despite the accomplishments of Generative Adversarial Networks (GANs) in modeling data distributions, training them remains a challenging task. A contributing factor to this difficulty is the non-intuitive nature of the GAN loss curves, which necessitates a subjective evaluation of the generated output to infer training progress. Recently, motivated by game theory, duality gap has been proposed as a domain agnostic measure to monitor GAN training. However, it is restricted to the setting when the GAN converges to a Nash equilibrium. But GANs need not always converge to a Nash equilibrium to model the data distribution. In this work, we extend the notion of duality gap to proximal duality gap that is applicable to the general context of training GANs where Nash equilibria may not exist. We show theoretically that the proximal duality gap is capable of monitoring the convergence of GANs to a wider spectrum of equilibria that subsumes Nash equilibria. We also theoretically establish the relationship between the proximal duality gap and the divergence between the real and generated data distributions for different GAN formulations. Our results provide new insights into the nature of GAN convergence. Finally, we validate experimentally the usefulness of proximal duality gap for monitoring and influencing GAN training. 
\end{abstract}

\section{Introduction} \label{section_1}

Generative modeling is an important machine learning paradigm, aiming to learn data distributions. The ability to parametrically model the true underlying distribution of real-world data from a given empirical distribution brings with it the power to generate new and unseen instances. Generative adversarial networks (GANs) are perhaps the most popular and successful of innovations for learning data distributions. A GAN formulates the generative modeling problem as a zero-sum game between two agents - a Discriminator (D) and a Generator (G). The discriminator aims to differentiate the fake samples produced by the generator from samples belonging to the true data distribution. On the other hand, the generator seeks to fool the discriminator by learning a mapping from an input noise space to the data space. The generator can also be viewed as performing adversarial attacks on the discriminator, exploiting the information leak through the discriminator and learning the real data distribution as the game proceeds to an equilibrium. 

Formally, the GAN game is defined as : 
\begin{equation}\label{eq:gan_game}
    \underset{\theta_g\in \Theta_G}{\min} \ \underset{\theta_d \in \Theta_D}{\max} \ V(D_{\theta_d},G_{\theta_g}) ,
\end{equation}
where the generator (parametrized by $\theta_g$) and discriminator (parametrized by $\theta_d$) are neural networks and $V$ is the objective function that the agents seek to optimize. Different GAN formulations yield different expressions for $V$, each minimizing a unique divergence between the real and generated data distributions. The classic GAN formulation \cite{goodfellow2014generative} minimizes the JS divergence and is defined by :
\begin{equation*}
    \centering
      V={\mathop{\mathbb{E}_{ \textbf{x} \sim P_{r}}}[\log(D( \textbf{x}))] + 
      \mathop{\mathbb{E}_{\textbf{x} \sim P_{\theta_g}}}[\log (1-D(\textbf{x}))]}
\end{equation*}
where $P_r$ denotes the real data distribution and $P_{\theta_g}$ denotes the generated data distribution.

In any learning problem, the trajectory of the loss functions should indicate the goodness of the trained model. However, such intuitive inferences cannot be drawn from the loss curves of a GAN. This is because classical training of a GAN involves alternate gradient descent optimization of the objective function w.r.t the individual agents.
Each optimization step of an agent alters its adversary's loss surface, resulting in non-intuitive loss curves for both the agents over time.
Figure \ref{fig:loss_curves} shows discriminator and generator loss curves for a GAN when it (a) converges and (b) diverges. Ideally, losses should decrease during model convergence and increase during divergence. However, we observe a diminishing generator loss and an increasing discriminator loss when the GAN converges. When it diverges, there is an interplay between both the losses. These loss curves do not give any insight into the gradual improvement or degradation in the GAN's performance.
 \begin{figure}[t]
    \centering
    \begin{tabular}{cc}
    \subfloat[Convergence]{\includegraphics[width = 0.48\linewidth]{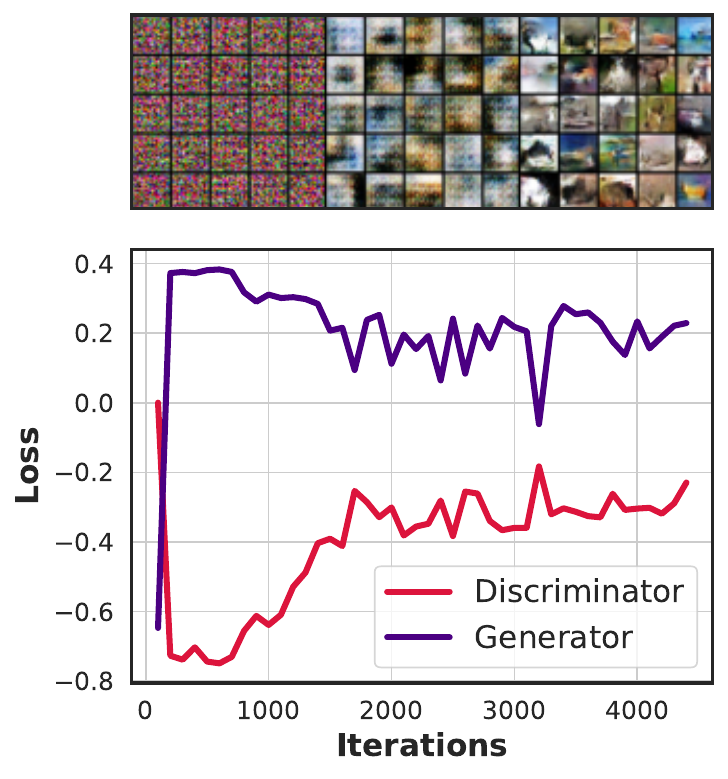}}&
    \subfloat[Divergence]{\includegraphics[width = 0.452\linewidth]{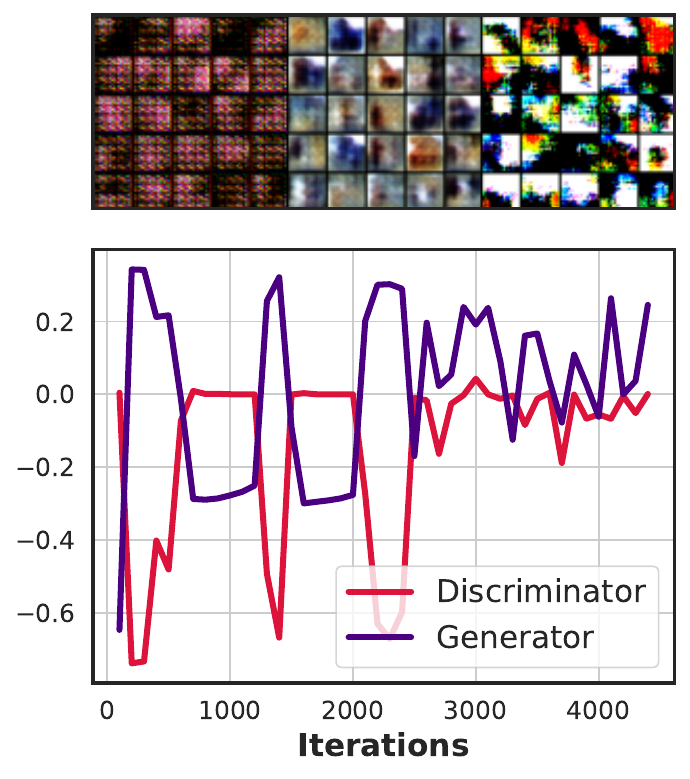}}\\
    \end{tabular}
    \caption{Loss curves throughout the training progress of WGAN on the CIFAR-10 dataset.}
     \label{fig:loss_curves}
\end{figure}
Thus, monitoring GANs often requires a subjective evaluation of the generated output. As a result, an exhaustive search over the architecture and hyperparameter space to find the delicate balance demanded by the GAN game becomes infeasible. This increases the complexity of GAN training that is already challenging due to the instabilities posed by the min-max gradient optimization. Objective measures capable of quantifying GAN training progress can reduce the training complexity.

 Duality Gap ($DG$) \cite{grnarova2019domain} for GANs, motivated by principles of game theory, is a recently proposed objective measure for monitoring GAN training. The $DG$ quantifies a GAN configuration's goodness in terms of the agents' ability to deviate from it in search of better optima.  When a GAN converges to a Nash equilibrium, no agent can unilaterally deviate to find a better optima, and hence the $DG$ would be zero. The ability to quantify convergence as well as the domain agnostic nature that requires no pre-trained models nor labeled data, makes $DG$ a potentially powerful tool to monitor GAN training.

However, $DG$ relies on the notion that GANs converge to Nash equilibria. On the contrary, recent studies \cite{farnia2020gans,berard2019closer} suggest that a Nash equilibrium need not always exist for a GAN, especially when trained under regularized environments that most modern GAN formulations employ. GANs can converge to stable stationary points that are not Nash equilibria, all the while producing realistic data samples with high fidelity. This weakens the foundation upon which the notion of $DG$ as a performance monitoring tool for GANs is built, eliciting the following questions: Can GANs capture the real data distribution even at non-Nash critical points? If so, would the $DG$ at such stationary points be zero? If not, how do we monitor the GAN training in such situations?

In this work, we study the above questions by introducing the notion of proximal duality gap for GANs that is generalizable to scenarios where Nash equilibria may not exist. Our work is motivated by the notion of proximal equilibria for GANs that serves as a general concept for characterizing GAN optimality \cite{farnia2020gans}. We define the $DG$ in terms of the agents ability to optimize the proximal GAN objective (see Eq. \ref{proximal_objective}) and call it as the Proximal Duality Gap ($DG^{\lambda}$). 
A proximal equilibrium for the GAN game (Eq. \ref{eq:gan_game}) is a Nash equilibrium w.r.t the proximal objective. Thus, whenever the GAN game attains a proximal equilibrium, $DG^{\lambda}$ will tend to zero, indicating model convergence. As all Nash equilibria form a subset of proximal equilibria, $DG^{\lambda}$ serves as a generic and robust measure that can quantify GAN convergence in the wild.


Overall, we make the following contributions:
\begin{itemize}
    \item We present an acute limitation of $DG$ for monitoring GAN training.
    \item We propose a theoretically grounded and robust extension - $DG^{\lambda}$, that overcomes this limitation and is also applicable to the broader context, when GANs converge to a non-Nash equilibrium.
    \item Using $DG^{\lambda}$, we derive insights into the nature of GAN convergence. Specifically, we study the relationship between the quality of the learned data distribution and the game equilibria. We show that for various GANs, a configuration $(\theta_d, \theta_g)$ where $P_{\theta_g}=P_r$ corresponds to a Stackelberg equilibrium.
    \item We demonstrate through experiments, the proficiency of $DG^{\lambda}$ for monitoring and influencing GAN training.
\end{itemize}

\section{Related Work}
Motivated by the non-inferrable nature of GAN loss curves, developing extrinsic measures to monitor GAN training has emerged as an active research area \cite{DBLP:journals/corr/abs-1802-03446, lucic2018gans,DBLP:journals/corr/abs-1808-04888}. Existing measures such as average log-likelihood \cite{goodfellow2014generative, theis2015note}, Inception Score (IS) \cite{salimans2016improved}, Frechet Inception Distance (FID) \cite{heusel2017gans} evaluate the output of GANs, but require pre-trained models. Further, these measures, including the more recent ones such as precision and recall \cite{sajjadi2018assessing, kynkaanniemi2019improved}, density and coverage \cite{tolstikhin2017adagan,DBLP:journals/corr/abs-2002-09797} do not monitor the training progress nor characterize the equilibria of the GAN game.

Duality Gap ($DG$) \cite{grnarova2019domain} is a recently proposed domain agnostic and computationally feasible metric for monitoring and evaluating GAN training. $DG$ is zero when the GAN converges to a Nash equilibrium making it an attractive metric for objectively monitoring GAN training. However, $DG$ has a fundamental limitation with its estimation process due to vanishing gradients. Adding perturbations to the GAN configuration before estimating the duality gap (perturbed duality gap) helps to overcome this issue \cite{sidheekh2020duality}. But both these approaches assume the convergence of GANs to Nash equilibria, which may not always be the case, especially for high dimensional datasets (as we demonstrate in the next section). Thus, limiting the applicability of $DG$ and perturbed $DG$ for monitoring GAN training.


\section{Background}
\subsection{A Brief Overview of GAN formulations}
\textbf{Classic GAN :}
    The min-max objective in the classic GAN \cite{goodfellow2014generative} formulation is :
    \begin{equation}
        \label{classic_gan_objective}
        V_c = \dfrac{1}{2} \mathop{\mathbb{E}_{ \textbf{x} \sim P_{r}}}[\log D(\textbf{x})]   + \dfrac{1}{2} \mathop{\mathbb{E}_{\textbf{x} \sim P_{{\theta_g}}}}[\log (1-D(\textbf{x}))]
    \end{equation}

 where the probabilistic discriminator $D$ outputs the likelihood of the input data point belonging to the real data distribution. The discriminator's objective is to maximize the log-likelihood to learn the conditional probability $P(y|\textbf{x})$, where $y=0 \text{ and }1$ indicate a fake and real data point respectively. Minimizing the above objective w.r.t the generator for the optimal discriminator is equivalent to minimizing the Jenson Shannon divergence (JSD) between $P_{\theta_g}$ and $P_r$.
 
\textbf{F-GAN:}
F-GAN \cite{nowozin2016f} is the generalization of the classic GAN to minimize arbitrary $f-$ divergences by incorporating an extension of the variational divergence estimation framework \cite{nguyen}. For a convex, lower semi-continuous function $f:\R_+ \rightarrow \R$ that satisfies $f(1)$ = 0, the $f-$ divergence between distributions $P$, and $Q$ is,
\begin{equation}
    \label{f_gan_div}
    D_f(P||Q)=\int p(x)f\left(\dfrac{q(x)}{p(x)}\right)dx
\end{equation}
The F-GAN objective that minimizes the $f-$ divergence ($D_f$) between $P_{\theta_g}$  and $P_r$ is defined as
\begin{equation}
    \label{f_gan_objective}
    V_f=\mathop{\mathbb{E}_{ \textbf{x} \sim P_{r}}}[D(\textbf{x})]   - \mathop{\mathbb{E}_{\textbf{x} \sim P_{{\theta_g}}}}[f^*(D(\textbf{x}))]
\end{equation}
where  $f^*(x)=\underset{t\in \text{Dom}(f)}{\sup}\{xt-f(t)\}$ is the Fenchel-conjugate of $f$. The classic GAN is a special case of F-GAN when $ f(t)=t\log(t) - (t+1)\log\left (\dfrac{t+1}{2}\right)$.

\textbf{Wasserstein GAN (WGAN):}
WGAN formulates the GAN game as a minimization of the optimal transport cost, the Wasserstein distance, between $P_r$ and $P_{\theta_g}$, a more efficient cost function to learn data distributions having support on low dimensional manifolds \cite{arjovsky-chintala-bottou-2017}.  Specifically, the Kantorovich-Rubinstein duality is used to arrive at the Wasserstein-1 (Earth Movers) distance between the distributions defined as: $ \underset{||D||_{L \leq 1}}{\sup} \E_{\textbf{x} \sim P_{r}}[D(\textbf{x})] - \E_{\textbf{x} \sim P_{{\theta_g}}}[D(\textbf{x})]$, where the supremum is over all 1-Lipschitz discriminators. In practice, the Lipschitz constraint is enforced through weight clipping resulting in the following WGAN objective:
\begin{equation}
V_{w_1} = \E_{\textbf{x} \sim P_{r}}[D(\textbf{x})] - \E_{\textbf{x} \sim P_{{\theta_g}}}[D(\textbf{x})]
\end{equation}
The WGAN formulation is also extended to a general transport cost \cite{convexduality_farnia} $c(\textbf{x},\textbf{y})$ by constraining the discriminators to be $c$-concave as
\begin{equation}
        \label{wgan_objective}
        V_w = \E_{\textbf{x} \sim {P}_{r}}[D(\textbf{x})] - \mathbb{E}_{\textbf{x} \sim {P}_{\theta_g}}[D^c(\textbf{x})],
\end{equation}
where $D^c$ is the c-transform of the discriminator $D$, i.e.,
\begin{equation}
D^c(\textbf{x}) = \underset{\textbf{y}}{\sup} \ \{ D(\textbf{y}) - c(\textbf{x,y}) \}
\end{equation}
and the Wasserstein distance between $P_{\theta_g}$ and $P_r$ is :
\begin{equation}
    W_c(P_{\theta_g}||P_r) = \underset{D-c-concave}{\sup} \E_{\textbf{x} \sim P_{r}}[D(\textbf{x})] - \E_{\textbf{x} \sim P_{{\theta_g}}}[D^c(\textbf{x})]
\end{equation}
Despite the added stability of Wasserstein distance over other divergences, training GANs remains an arduous task. This has motivated efforts towards understanding the nature of GAN convergence.

\subsection{Understanding GAN convergence}
\textbf{Classical Notion of GAN Equilibrium}

 Traditionally, the GAN game was expected to converge to a pure Nash equilibrium, a configuration ($\theta_d^*,\theta_g^*$) that is optimal for both the players i.e.
 \begin{equation*}
\centering
\underset{\theta_d}{\max}\ V(D_{\theta_d},G_{\theta_g^*})  =  \underset{\theta_g}{\min}\ V(D_{\theta_d^*},G_{\theta_g}) = V(D_{\theta_d^*},G_{\theta_g^*})
\end{equation*}
 A GAN having unbounded capacity learns the true data distribution at such a solution \cite{goodfellow2014generative}. However, a pure Nash equilibrium need not always exist for a zero sum game  \cite{Nash48}. Only an extended notion - the mixed strategy Nash equilibrium (MNE) is guaranteed to exist. Recent GAN formulations explicitly seek the MNE \cite{arora17ageneralization,hsieh2019finding}. As a mixed strategy gives a distribution over the model parameters, the stationary point to which the GAN converges need not be individually optimal for both the players. 

\textbf{GANs Need Not Converge to Nash Equilibria}

 GAN convergence has also been well studied from an optimization perspective using the notion of stability \cite{daskalakis2017training,fiez2019convergence, DBLP:conf/nips/NouiehedSHLR19,DBLP:conf/nips/ZhangYB19,mazumdar2019onfinding,mokhtari2020unified,lin2020gradient}. As GAN formulations are non-convex, only local surrogates of equilibria may be attainable while employing gradient based optimization. A (differential) local Nash equilibrium (LNE) ($\theta_d^*,\theta_g^*$) satisfies two properties: 1) $\nabla_{\theta_d}V(\theta_d^*,\theta_g^*) = \nabla_{\theta_g}V(\theta_d^*,\theta_g^*) =  0$ and 2) $\nabla^2_{\theta_d}V(\theta_d^*,\theta_g^*) \prec 0 \ , \  \nabla^2_{\theta_g}V(\theta_d^*,\theta_g^*) \succ 0$.
 However, recent literature \cite{DBLP:mazumdar2018onthe,adolphs19localSaddle} suggests the existence of many stable attractors that are not LNE for the alternate gradient descent optimization, nor produce realistic data and proposes methods to escape these attractors.  \cite{fiez2019convergence,jin2020local} study the gradient dynamics of sequential games and establish that their only stable attractors are Stackelberg equilibria. However, the relationship between the learned data distribution and the game configuration has not been well studied. Further, recent empirical studies \cite{berard2019closer,farnia2020gans} also suggest that \textit{GANs need not attain an LNE to produce realistic data}.
 
 \begin{figure}[h]
    \centering
    \includegraphics[width = 0.56\linewidth]{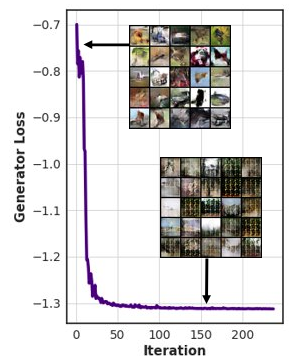}
    \includegraphics[width = 0.32\linewidth]{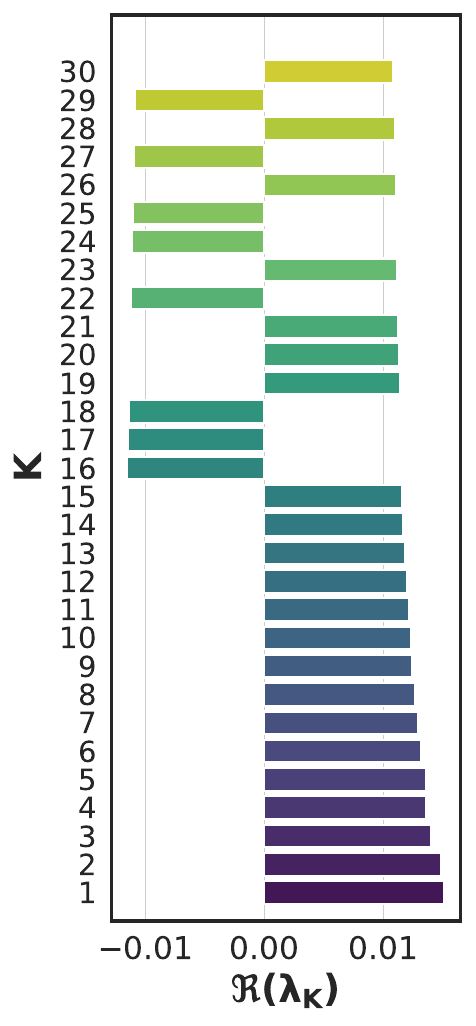}
    \caption{The high fidelity images  outputted by a converged GAN deteriorates on optimizing only w.r.t the generator while attaining a lower loss (left), indicating that the GAN has not converged to a Nash equilibrium; confirmed by the positive and negative eigenvalues of the Hessian (right).}
    \label{fig:non_nash_demo}
\end{figure}
 We verify the above hypothesis by training a spectral normalized GAN (SNGAN) on the CIFAR-10 dataset for 100 epochs ensuring that the models have converged producing high fidelity samples. As a local Nash equilibrium is locally optimal for the agents, optimizing the objective function w.r.t any individual player should not facilitate departure of the player from such a point. However, as demonstrated in Figure \ref{fig:non_nash_demo}, the generator deviates from the attained equilibrium on further optimizing the objective function w.r.t only the generator. While the generator attains a lower cost, there is a clear deterioration in the generated samples' quality. This implies that the GAN has not converged to an LNE. We further strengthen the claim by verifying the top-K eigenvalues ($\lambda_K$) (by magnitude) of the Hessian of the objective w.r.t the generator's parameters. However, as shown in Figure \ref{fig:non_nash_demo}, the Hessian has both positive as well as negative eigen values violating the second property of an LNE, thus emphasizing that GANs can converge to non-Nash attractors, all the while producing high fidelity samples. Similar results on GANs trained for MNIST and CELEB-A datasets are discussed in the supplementary material. 
 

\textbf{Proximal Equilibria for GANs}

 Majority of studies attempting to characterize the equilibria for GANs assume unbounded capacity for the models (realizable setting) \cite{goodfellow2014generative,arora17ageneralization,hsieh2019finding}. However, in practice, most GAN architectures  \cite{brock2018large,zhang2019self,miyato2018spectral,gulrajani2017improved,arjovsky-chintala-bottou-2017,DBLP:journals/corr/RadfordMC15} employ normalization and regularization to achieve state of the art performance. 
 A recent study \cite{farnia2020gans} on GAN convergence under the non-realizable setting proposes a more generic notion of equilibira - the Proximal Equilibria (PE). This notion of equilibira is derived from a sequential game-play perspective for a GAN, for which a Stackelberg equilibrium is guaranteed to exist under mild continuity assumptions \cite{jin2020local,fiez2019convergence}.
 Farnia and Ozdaglar define a proximal operator over the original GAN objective, allowing the discriminator to be optimal in a neighbourhood (controlled by $\lambda$),
\begin{equation}
\label{proximal_objective}
V^{\lambda}(D_{\theta_d},G_{\theta_g}) = \underset{\Tilde{\theta_d} \in \Theta_{D}}{\max} \ V(D_{\Tilde{\theta_d}},G_{\theta_g}) - \lambda ||D_{\Tilde{\theta_d}} - D_{\theta_d}||^2
\end{equation}

 A proximal equilibrium is defined as the Nash equilibrium for the objective $V^{\lambda}$, which is guaranteed to exist \cite{farnia2020gans}.
 Formally, a configuration $(\theta_d^*,\theta_g^*)$ of the GAN game (Eq \ref{eq:gan_game}) is called a $\lambda-$proximal equilibrium if and only if $\forall  \ \theta_d, \theta_g$,
\begin{equation}
\begin{aligned}
V(D_{\theta_d},G_{\theta_g^*}) &\leq{} V(D_{\theta_d^*},G_{\theta_g^*}) \\
      & \leq  \underset{\Tilde{\theta_d} \in \Theta_{D}}{\max} \ V(D_{\Tilde{\theta_d}},G_{\theta_g}) - \lambda ||D_{\Tilde{\theta_d}} - D_{\theta_d^*}||^2 \\
\end{aligned}
\label{eq:pe}
\end{equation}
 As the extreme cases $\lambda \rightarrow \infty (0)$ recreate a Nash (Stackelberg) equilibrium, the  $\lambda-$proximal equilibrium explores the spectrum of equilibria between the two and thus serves as a generic notion for GAN convergence. \cite{farnia2020gans} also suggest proximal training to explicitly enforce convergence to a proximal equilibrium. Our work, in contrast, is aimed towards evaluating GAN convergence and quantifying its goodness, irrespective of how it was trained.

\section{Proximal Duality Gap}
We first define the classical duality gap \cite{grnarova2019domain} for GANs before moving on to the proposed measure.
\begin{definition}
\label{DG_definition}
Consider the GAN game presented in Eq.\ref{eq:gan_game}. Then, for a configuration ($\theta_d,\theta_g$) of the game, the duality gap ($DG$) is defined as :

$DG(\theta_d,\theta_g)=  \underset{\Tilde{\theta_d} \in \Theta_D}{\max} \ V(D_{ \Tilde{\theta_d}},G_{\theta_g}) - \underset{\Tilde{\theta_g} \in \Theta_G}{\min} \ V(D_{\theta_d},G_{\Tilde{\theta_g}})$  
\end{definition}
At a pure Nash equilibrium ($\theta_d^*,\theta_g^*$), $DG (\theta_d^*,\theta_g^*)=0$. 
 $DG$ has some interesting properties. First, it is lower bounded by the JSD between $P_r$ and $P_{\theta_g}$ (for $V$ = $V_c$). Second, as $DG$ is applicable to any GAN objective $V(D,G)$ and does not require pre-trained classifier nor labeled data, it is domain agnostic and potentially better equipped to monitor GAN training over other prevalent evaluation measures. However, as discussed previously, GANs can converge to non Nash attractors, where $P_r$ and $P_{\theta_g}$ are aligned well. $DG$ at such  equilibria is not very well understood, limiting its practicality for monitoring GAN training.

We extend the notion of Duality Gap to the general context of training GANs where Nash equilibria need not be attainable, utilizing the proximal operator. We define the Proximal Duality Gap ($DG^\lambda$) as below.
\begin{definition}
 \label{PDG_definition}
The proximal duality gap ($DG^{\lambda}$) at $(\theta_d,\theta_g)$ for the GAN game presented in Eq.\ref{eq:gan_game} is defined as 
\begin{eqnarray*}
DG^{\lambda}(\theta_d,\theta_g) & = & V_{D_{w}}(\theta_g) - V_{G_{w}}^{\lambda}(\theta_d), \ \text{where} \\
V_{D_{w}}(\theta_g) & = & \underset{\theta_d^{'} \in \Theta_D}{\max} \ V(D_{ \theta_d^{'}},G_{\theta_g}) \\ 
V_{G_{w}}^{\lambda}(\theta_d) & = & \underset{\theta_g^{'} \in \Theta_G}{\min} \ V^{\lambda}(D_{\theta_d},G_{\theta_g^{'}})
\end{eqnarray*}
\end{definition}
The terms $D_{w}$( or $G_{w}$) indicate the worst adversary that the generator (or discriminator) might face. Note that $\max_{\theta_d^{'} \in \Theta_D}V^{\lambda}(D_{\theta_d^{'}},G_{\theta_g}) = \max_{\theta_d^{'} \in \Theta_D}\ V(D_{\theta_d^{'}},G_{\theta_g})$. Thus, for a GAN configuration attained using $V$, the $DG^{\lambda}$ measures the ability of the agents to deviate from it w.r.t the proximal objective $V^{\lambda}$.
\begin{remark}
For all proximal equilibria ($\theta_d^*,\theta_g^*$) of the GAN game defined by Eq. \ref{eq:gan_game}, $V_{D_{w}}(\theta_g^*)=V^{\lambda}_{G_{w}}(\theta_d^*)=V^{\lambda}(\theta_d^*,\theta_g^*)$, thus  $DG^{\lambda}(\theta_d^*,\theta_g^*)=0$.
\end{remark}
This remark directly follows from the definition of proximal equilibria (Eq. \ref{eq:pe}). Thus $DG^{\lambda}$ tending to zero implies that the GAN game has converged to a proximal equilibrium. However, as GANs are used for learning data distributions, it is important that measures to quantify GAN convergence should also give insights into the nature of the learned data distribution. Thus, to establish the applicability of $DG^{\lambda}$, we study how it relates to the divergence between the real and generated data distributions for various GAN formulations.
\subsection{Theoretical Analysis}
The definition of $DG^\lambda$ has two terms - $V_{D_w}$ and $V^\lambda_{G_w}$. We first establish the relationship between $V_{D_w}$ and the divergences ($DIV$) used in various GAN formulations - the JS divergence ($JSD(P_{\theta_g}||P_r)$) for classical GAN objective $V_c$, the Wasserstein distance ($W_c(P_{\theta_g}||P_r)$) for the WGAN objective $V_w$, and the $f-$divergence ($D_f(P_{\theta_g}||P_r)$) for the F-GAN objective $V_f$. 
 In our analysis, following \cite{farnia2020gans,grnarova2019domain,arjovsky-chintala-bottou-2017}, we assume that for a fixed generator, an optimal discriminator ($D_w$) that maximizes $V$ exists.

\begin{lemma}
Given a generator $\theta_g$, $V_{D_w}$ is related to the divergences between $P_r$ and $P_{\theta_g}$ in the various GAN objectives as follows
\begin{equation*}
    V_{D_w}(\theta_g)=
    \begin{cases}
      JSD(P_{\theta_g}||P_r) - \log 2, & \text{if}\ V=V_c \\
      W_c(P_{\theta_g}||P_r), & \text{if}\ V=V_w \\
      D_f(P_{\theta_g}||P_r), & \text{if}\ V=V_f 
    \end{cases}
\end{equation*}
\end{lemma}
\begin{proof}
Deferred to the supplementary material.
\end{proof}
Thus, $V_{D_w}(\theta_g)$ measures the quality of the generator $G_{\theta_g}$. If $\theta_g$ is optimal such that $P_{\theta_g}$ covers the real distribution $P_r$, then $V_{D_w}(\theta_g)$ achieves the minimum value. In case of a mismatch between $P_{\theta_g}$ and $P_r$, either due to insufficient support or poor sample quality, $V_{D_w}(\theta_g)$ will increase, thus making it a potentially useful metric in itself to monitor GAN training. However, $V_{D_w}$ does not incorporate the ability of the discriminator to deviate from the current game configuration. Thus, it cannot identify if the game has converged to an equilibrium. Further, as the minimum value for $V_{D_{w}}$ will vary depending upon the GAN formulation, it cannot serve as a domain agnostic measure for monitoring GAN training. $DG^{\lambda}$ on the other hand will always tend to zero on attaining a proximal equilibrium irrespective of the GAN formulation. Thus, in the next result we analyze the behavior of $DG^\lambda$ for the three GAN formulations. 
Specifically, we show that both in the realizable setting ($G$ is of unbounded capacity; $\exists \  \theta_g$ such that $P_{\theta_g}=P_r$) and the more practical  non-realizable setting (where $G$ is of bounded capacity), the proximal duality gap is positive and lower bounded closely by the divergence between $P_r$ and $P_{\theta_g}$.
\begin{theorem}
Consider a GAN game governed by an objective function $V$. Then the proximal duality gap ($DG^\lambda$) at a configuration ($\theta_d,\theta_g$) is related to the  divergence between the real ($P_r$) and generated ($P_{\theta_g}$) data distributions as follows.

\begin{equation*}
    DG^{\lambda}(\theta_d,\theta_g) \geq DIV(P_{\theta_g}||P_r) - \kappa
\end{equation*}
where,
\begin{equation*}
    DIV(P_{\theta_g}||P_r)=
    \begin{cases}
      JSD(P_{\theta_g}||P_r), & \text{if}\ V=V_c \\
      W_c(P_{\theta_g}||P_r), & \text{if}\ V=V_w \\
      D_f(P_{\theta_g}||P_r), & \text{if}\ V=V_f \\
    \end{cases}
\end{equation*}
 and $\kappa \ (\geq0)$ denotes the minimum divergence that the considered class of generator functions can achieve with the real data distribution.
\end{theorem}
\begin{proof}
Deferred to the supplementary material.
\end{proof}
\begin{corollary}
Under the realizable setting, since $\exists \  \theta_g$ such that $P_{\theta_g}=P_r$, $\kappa=0$ and hence $DG^{\lambda}(\theta_d,\theta_g) \geq DIV(P_{\theta_g}||P_r)$. 
\end{corollary}

This theorem non-trivially extends the prior result on the bound of $DG$ only for classic GAN \cite{grnarova2019domain} under the realizable setting. 
\subsection{Implications}
As $DG^{\lambda}$ is lower bounded by the divergence between the real and generated data distributions, $DG^{\lambda} \rightarrow 0 $ not only implies that the GAN has reached an equilibrium, but also that generated distribution is close to the real data distribution. 
\begin{corollary}
For GAN formulations defined by $V_c, V_w$ or $V_f$, the generator attains the minimum possible divergence with the real data distribution at a proximal equilibrium ($\theta_d^*,\theta_g^*$), as $DG^{\lambda}(\theta_d^*,\theta_g^*) = 0$. 
\end{corollary}
This furthers the adeptness of proximal equilibria to serve as a general optimality notion for GANs. As a proximal equilibrium need not be a Nash equilibrium, it also facilitates the following interesting observation:
\begin{remark}
GANs can capture the real data distribution even at non Nash game configurations. 
\end{remark}
Thus, the empirical observation \cite{berard2019closer} that GANs can produce realistic data samples having high fidelity despite converging to a non-Nash attractor of the gradient dynamics is theoretically justified. 

On similar lines, a natural question that arises concerning the behaviour of $DG^{\lambda}$ is whether proximal equilibria constitute an exhaustive notion of equilibria at which GANs can capture the real data distribution. Precisely, we ask the question: Does GAN converging to a solution such that $P_{\theta_g} \rightarrow P_{r} \implies DG^{\lambda} \rightarrow 0?$ Our answer begins with the following proposition concerning the extreme case for $DG^{\lambda}$ as $ \lambda \rightarrow 0$.

\begin{theorem}
\label{thm:pdg_zero_stackelberg}
The proximal duality gap ($DG^{\lambda}$) at a configuration  ($\theta_d^*,\theta_g^*$) for the  GAN game defined by $V_c, V_w,$ or $V_f$ is equal to zero for $\lambda=0$, when the generator learns the real data distribution; $P_{\theta_g^*}=P_r \implies DG^{\lambda=0}(\theta_d^*,\theta_g^*) = 0$.
\end{theorem}
\begin{proof}
Deferred to the supplementary material.
\end{proof}
\begin{corollary}
For the GAN formulations defined by $V_c, V_w$ or $V_f$, the generator learns the real data distribution at a configuration ($\theta_d^*,\theta_g^*$) if and only if ($\theta_d^*,\theta_g^*$) constitutes a Stackelberg equilibrium.
\end{corollary}
\begin{proof}
Deferred to the supplementary material.
\end{proof}
Thus $DG^{\lambda=0}(\theta_d,\theta_g) \rightarrow 0$ whenever $P_{\theta_g} \rightarrow P_r$. The value of $\lambda$ restricts the discriminator in the proximal objective ($V^{\lambda}$) to be optimal within a neighbourhood. As $\lambda \rightarrow 0$, $V^{\lambda}$ considers the optimal discriminator over the entire parameter space ($\Theta_D$). Thus $DG^{\lambda=0}(\theta_d,\theta_g) = 0$ implies that ($\theta_d,\theta_g$) is a Stackelberg equilibrium. 
As all proximal equilibria form a subset of Stackelberg equilibria, $DG^{\lambda=0}$ would thus be an ideal choice to monitor GAN convergence. However, as $\lambda$ decreases, the complexity of computing $V^{\lambda}$ increases rapidly and becomes infeasible as $\lambda \rightarrow 0$. Hence, it is only practical to check if a GAN configuration is a $\lambda(>0)-$proximal equilibrium. But can $DG^{\lambda}$, for a fixed value of $\lambda(> 0)$ monitor convergence of GANs to all $\lambda^{'}-$proximal equilibria? To address this question, let us study the two cases - (i) $\lambda^{'} \geq \lambda$ and (ii) $\lambda^{'} < \lambda$ separately. The following theorem addresses case (i) utilizing the hierarchical property of proximal equilibria.
\begin{theorem}
\label{thm:hierarchical_pdg}
Consider a GAN configuration $(\theta_d,\theta_g)$. Then,  $\forall \lambda^{'} \geq \lambda_{0}$,
\begin{equation*}
DG^{\lambda=\lambda^{'}}(\theta_d,\theta_g) = 0 \implies DG^{\lambda=\lambda_{0}}(\theta_d,\theta_g) = 0 
\end{equation*}
\end{theorem}
\begin{proof}
Deferred to Supplementary material
\end{proof}
$DG^{\lambda^{'}}(\theta_d,\theta_g) = 0$ is a sufficient condition for ($\theta_d,\theta_g$) being a $\lambda^{'}-$proximal equilibrium. Thus, it follows from theorem \ref{thm:hierarchical_pdg} that $DG^{\lambda}$ is adept to monitor convergence of GANs to all $\lambda^{'} (\geq \lambda)-$proximal equilibria. However, when a GAN converges to a  $\lambda^{'}(< \lambda)-$proximal equilibrium, $DG^{\lambda}$ can be prone to error. The following theorem addresses this issue by upper bounding the difference between $DG^{\lambda}$ and the divergence between real and generated data distributions.

\begin{theorem}
\label{thm:pdg_upperbound}
Consider a GAN game governed by an objective function $V$. For $\lambda > 0$, let $V^{\lambda}$ denote the proximal objective defined by $V^{\lambda}(\theta_d,\theta_g)=max_{\Tilde{\theta_d}}V(\Tilde{\theta_d},\theta_g)-\lambda||D_{\Tilde{\theta_d}}-D_{\theta_d}||^2$ . Then, $\forall \ \epsilon > 0, \ \exists \ \delta > 0$ such that if $||D_{\theta_d}-D_{\Tilde{\theta_d}}||<\delta$, then $DG^{\lambda}(\theta_d,\theta_g) - DIV(P_{\theta_g}||P_r)<\epsilon$
where,
\begin{equation*}
    DIV(P_{\theta_g}||P_r)=
    \begin{cases}
      JSD(P_{\theta_g}||P_r), & \text{if}\ V=V_c \\
      W_c(P_{\theta_g}||P_r), & \text{if}\ V=V_w \\
      D_f(P_{\theta_g}||P_r), & \text{if}\ V=V_f \\
    \end{cases}
\end{equation*}
\end{theorem}

\begin{proof}
Deferred to Supplementary material
\end{proof}
\begin{corollary}
\label{cor:pdg_epsilon}
For a GAN configuration ($\theta_d^*,\theta_g^*$) such that $P_{\theta_g^*}=P_r, DG^{\lambda}(\theta_d^*,\theta_g^*) < \epsilon$
\end{corollary}
Thus even when the GAN converges to a $\lambda^{'}(<\lambda)-$proximal equilibrium, the error that $DG^{\lambda}$ can incur is bounded. Previously, the absence of such an upper bound as implied by theorem \ref{thm:pdg_upperbound} for $DG$ meant that $DG$ need not necessarily be close to zero when $P_{\theta_g}$ is close to $P_r$. $DG^{\lambda}$ however, rules out this possibility and is thus a theoretically grounded and robust measure that can serve as a tool for monitoring convergence of GANs in the wild.

\begin{figure*}[t!]
    \centering
    \begin{tikzpicture}
        \begin{axis}[%
        hide axis,
        legend columns=2,
        xmin=10,
        xmax=50,
        ymin=0,
        ymax=0.4,
        legend style={draw=white!15!black,legend cell align=right}
        ]
        \hspace{10pt}
        \addlegendimage{crimson,line width=2. pt}
        \addlegendentry{\small
 Proximal Duality Gap ($DG^{\lambda}$) };
        \addlegendimage{indigo,line width=2. pt,mark options={solid}}
        \addlegendentry{\small
Classical Duality Gap ($DG$)};
        \end{axis}
    \end{tikzpicture}\\
    \begin{tikzpicture}
        \node [draw=black!2,rotate=90,anchor=center,fill=gray!8] { \hspace{0.082\linewidth} \textbf{WGAN}\hspace{0.09\linewidth} };    
    \end{tikzpicture}
    \includegraphics[width = 0.26\linewidth]{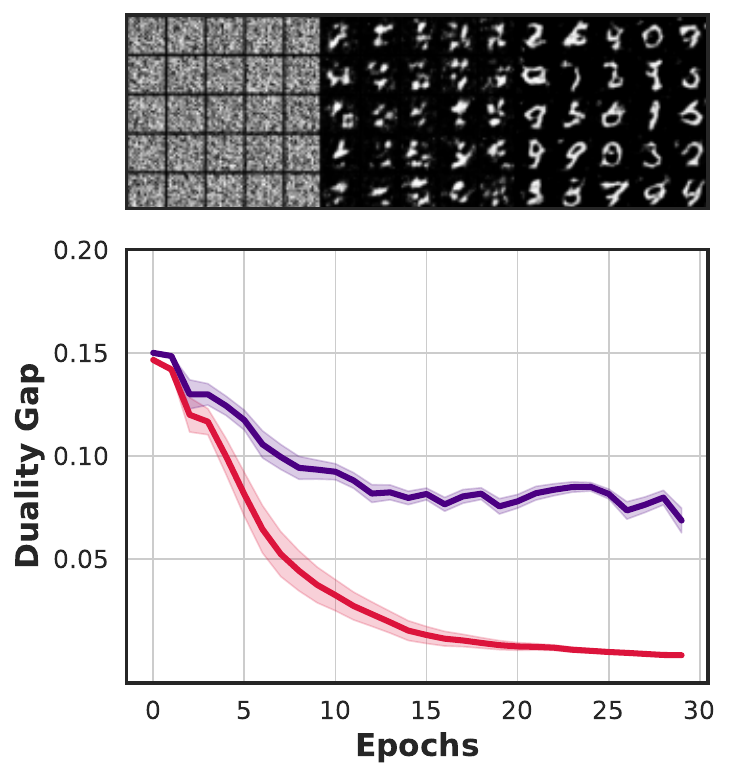}
    \includegraphics[width = 0.26\linewidth]{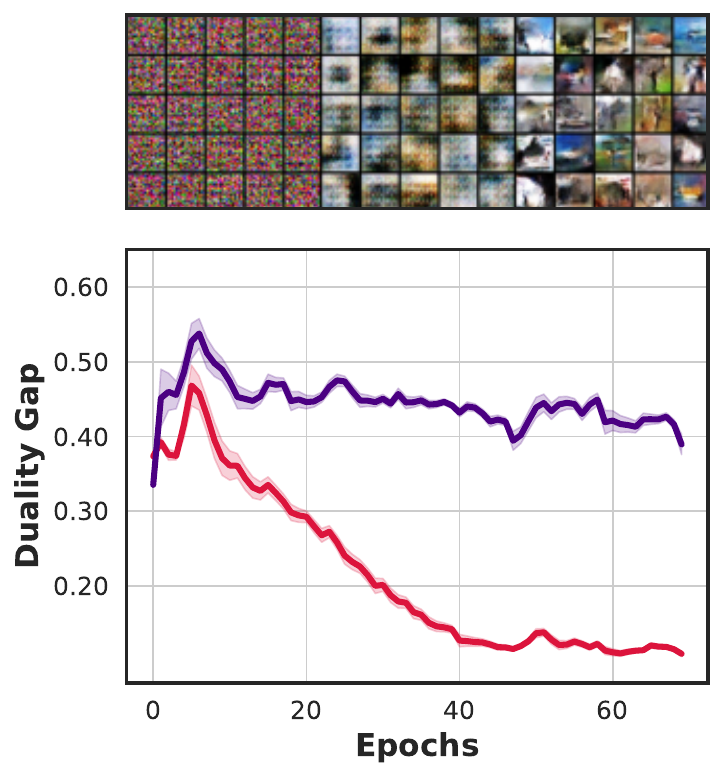}
    \includegraphics[width = 0.26\linewidth]{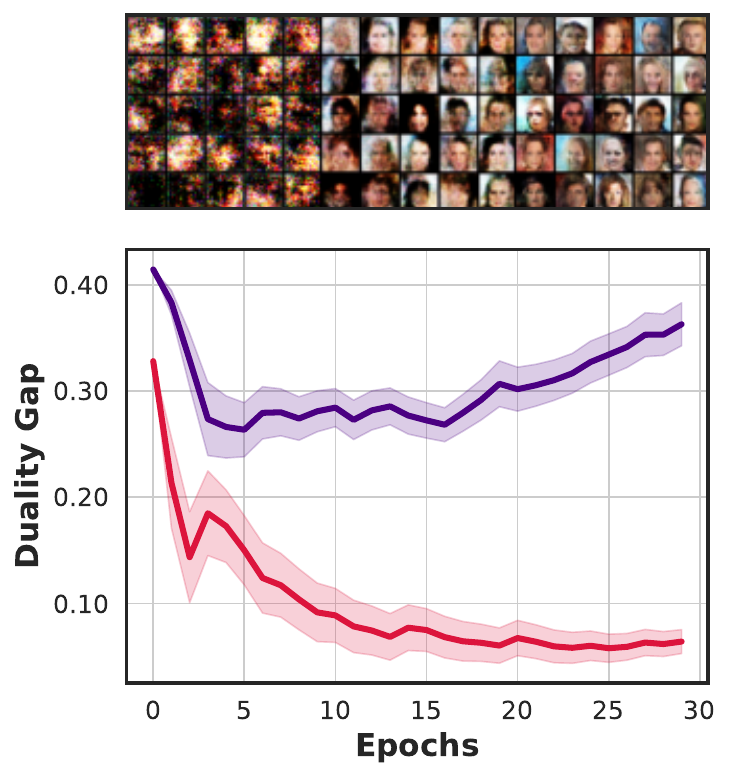}\\
    \begin{tikzpicture}
        \node [draw=black!2,rotate=90,anchor=center,fill=gray!8] { \hspace{0.082\linewidth} \textbf{SNGAN} \hspace{0.09\linewidth} };    
    \end{tikzpicture}
    \includegraphics[width = 0.26\linewidth]{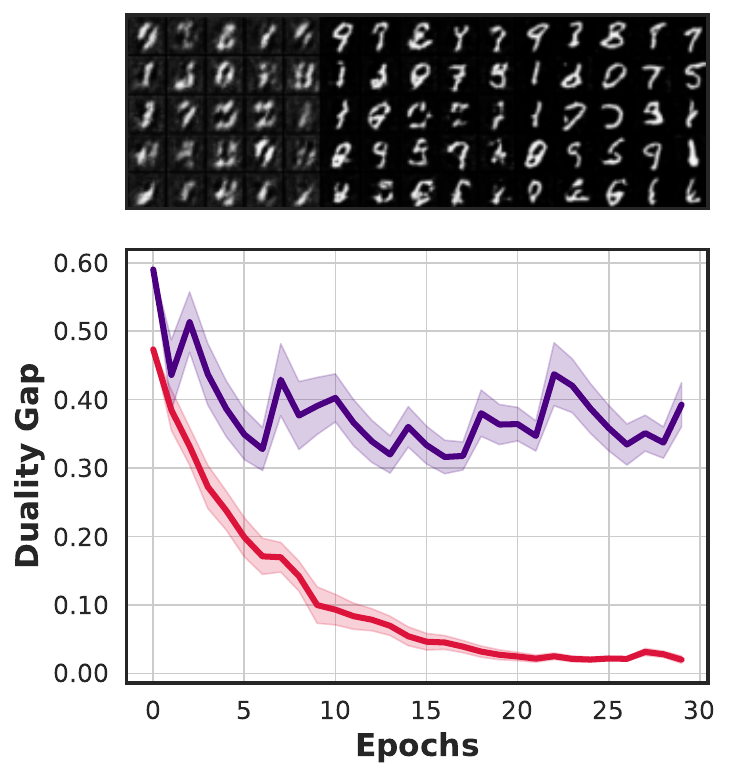}
    \includegraphics[width = 0.26\linewidth]{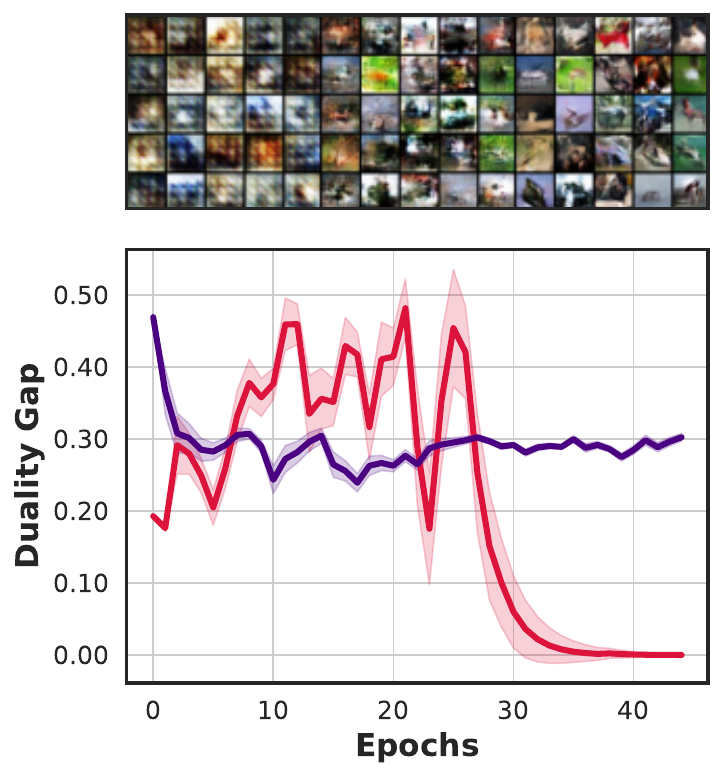}
    \includegraphics[width = 0.26\linewidth]{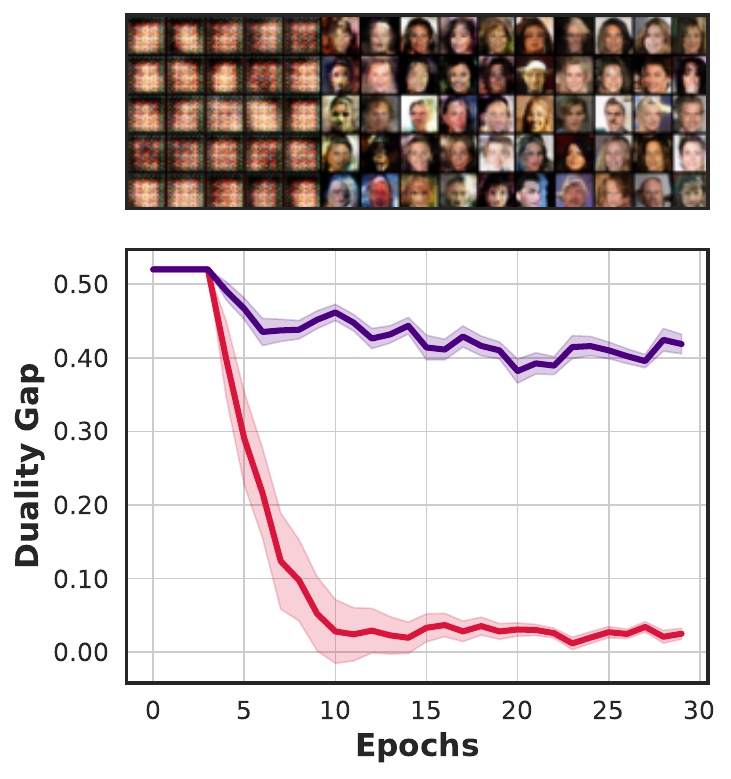}
    \caption{Monitoring Convergence of WGAN (Top Row) and SNGAN (Bottom Row) over the datasets MNIST (Col 1), CIFAR-10 (Col 2) and CELEB-A (Col 3) using duality gap. $DG$ is not reflective of the GAN convergence. $DG^{\lambda}$, on the other hand is indicative of convergence and saturates close to zero. The shaded region indicates the standard deviation over 5 independent trials.}
    \label{fig:monitoring_gan_convergence}
\end{figure*}

\subsection{Estimating Proximal Duality Gap}
Computing the true $DG^{\lambda}$ for a GAN configuration is a hard task as it involves finding the optima of non convex functions. We approximate $DG^{\lambda}$ by employing gradient descent to estimate $V_{D_{w}}$ and $ V_{G_{w}}^{\lambda}$.  Following \cite{farnia2020gans}, we use the Sobolev norm in the proximal objective $V^{\lambda}$, given by  $||D|| = \sqrt{\E_{x \sim P_r}[||\nabla_{x}D(x)||^{2}_{2}]}$. For a GAN game governed by $V$, estimating $V_{D_{w}}$ involves optimizing $V(\theta_d,\theta_g)$ w.r.t $\theta_d$ using gradient descent. However, estimating $V_{G_{w}}^{\lambda}$ requires gradient computation over the proximal operator. 
As shown in \cite{farnia2020gans}, for a GAN objective function $V$ that is smooth w.r.t $\theta_d$, the gradient of the proximal objective ($V^{\lambda}$) w.r.t $\theta_g$ can be obtained in terms of $V$ as : $\nabla_{\theta_g} V^{\lambda} (D_{\theta_d}, G_{\theta_g}) = \nabla_{\theta_g} V (D_{\theta_d^{*}},G_{\theta_g})$, where $\theta_d^*$ represents the optimal discriminator implied by the proximal objective. Thus, to estimate $V^{\lambda}_{G_w}$, at every iteration we use gradient descent to obtain $\theta_d^*$ for the corresponding $\theta_g$ and update $\theta_g$ to minimize $V(\theta_d^*,\theta_g)$. The algorithm  for the overall estimation process and the associated computational complexity are discussed in the supplementary material. To ensure that we obtain an unbiased estimate for $DG^{\lambda}$, following \cite{grnarova2019domain}, we split the dataset into 3 disjoint sets - $S_A$, $S_B$ and $S_C$. We train the GAN using $S_A$, we use $S_B$ to find the worst case counter parts $D_w$ and $G_w$ via gradient descent, and $S_C$ to evaluate the objective function at the obtained worst case configurations.
\section{Experimentation}
To experimentally establish the proficiency of $DG^{\lambda}$, we consider a WGAN with weight-clipping (that optimizes $V_w$) \cite{arjovsky-chintala-bottou-2017} and a Spectral Normalized GAN (SNGAN) (that optimizes $V_c$) \cite{miyato2018spectral} over 3 datasets - MNIST \cite{deng2012the}, CIFAR-10 \cite{cite-cifar10} and CELEB-A \cite{liu2015faceattributes}. For all the experiments, we use the 4-layer DCGAN \cite{DBLP:journals/corr/RadfordMC15} architecture for both the generator and the discriminator networks, and an Adam optimizer \cite{DBLP:journals/corr/KingmaB14} to train the models. To compute $DG^{\lambda}$, we use $\lambda$=0.1 and 20 optimization steps for approximating the proximal objective. We used the torchgan framework \cite{pal2019torchgan} to train and evaluate all GAN models.  Further implementation details for each experiment are provided in the supplementary material and the source code is publicly available
\footnote{\url{https://github.com/proximal-dg/proximal_dg}}.

\subsection*{Monitoring GAN training using DG{$^\lambda$}}

Our first experiment aims to establish that $DG^{\lambda}$ is better equipped over $DG$ to monitor GAN convergence in practice. To this end, we train a WGAN and SNGAN over the 3 datasets till the models have converged. We compute $DG$ and $DG^{\lambda}$ throughout the training process, in addition to the (image) domain specific evaluation measures - IS and FID. Figure \ref{fig:monitoring_gan_convergence} demonstrates the training progress of each GAN, qualitatively through visualization of samples from the learned data distribution and quantitatively in terms of $DG$ and $DG^{\lambda}$. The high fidelity of the learned data samples indicates that the models have converged. However, $DG$ is not reflective of the training progress. This suggests that the GANs have not attained a Nash equilibrium - the behaviour of $DG$ at non-Nash critical points is not well understood. $DG^{\lambda}$, on the other hand, captures the trend in the training progress, and eventually saturates close to zero. Thus, it is able to better characterize convergence. We also quantitatively validate the above observation by examining the correlation between $DG$ and $DG^{\lambda}$ against popular measures - IS and FID that quantify the quality of $P_{\theta_g}$. As shown in Table \ref{table:correlation}, $DG^{\lambda}$ has a higher positive correlation with FID and a higher negative correlation with IS as compared to $DG$. The duality gap is negatively correlated with IS because the latter increases as the GAN learns the real data distribution, whereas the former decreases. A larger (or smaller) IS (or FID) implies better fidelity of the learned data distribution. The higher correlation of $DG^{\lambda}$ with IS and FID over the training process thus validates that $DG^{\lambda}$ is adept to monitor not only the convergence of GANs to an equilibrium but also the goodness of $P_{\theta_g}$.

\begin{table}[t]
\centering
\begin{tabular}{@{}lcccc@{}}
\toprule
\textbf{}         & \multicolumn{4}{c}{\textbf{Pearson Correlation Coefficient (r)}}            \\ 
                  & r$_{DG,IS}$ & r$_{DG^{\lambda},IS}$ & r$_{DG,FID}$ & r$_{DG^{\lambda},FID}$ \\ \midrule
\textit{MNIST}    & $-0.752$      & $\mathbf{-0.892}$                & $0.845$        & $\mathbf{0.957}$                  \\
\textit{CIFAR-10} & $-0.368$      & $\mathbf{-0.854}$                & $0.213$        & $\mathbf{0.738}$                 \\
\textit{CELEB-A}  & $-0.213$      & $\mathbf{-0.524}$                & $0.263$        & $\mathbf{0.699}$                  \\ \bottomrule
\end{tabular}
\caption{Comparing the correlation of $DG$ and $DG^{\lambda}$ with IS and FID computed during the training of WGAN over the 3 datasets.}
\label{table:correlation}
\end{table}

\subsection*{Visualizing the effect of $\lambda$}

\begin{figure}[h]
    \centering
    \includegraphics[width =0.75\linewidth]{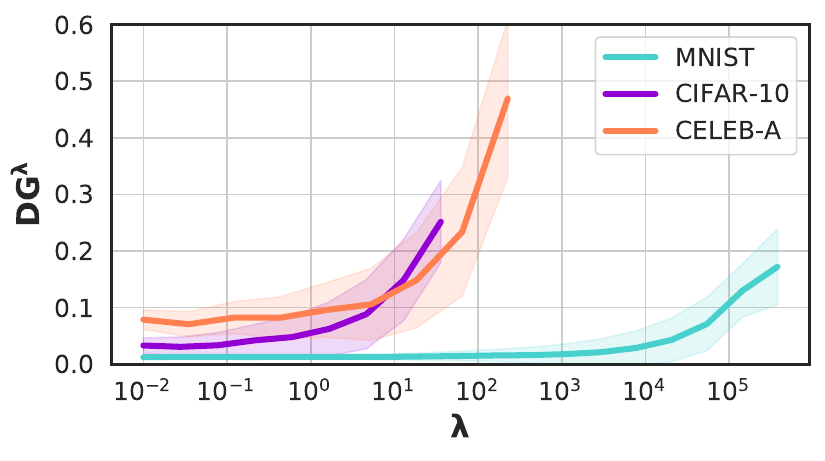}
    \caption{ The behaviour of $DG^{\lambda}$ for increasing $\lambda$}
     \label{fig:ablation_lamda}
\end{figure}

$\lambda$ is a critical hyperparameter that determines the proficiency of $DG^{\lambda}$. We observed from the theoretical analysis that, while $DG^{\lambda}$ is adept to monitor convergence of GANs to all $\lambda^{'}(\geq\lambda)$-proximal equilibria, it is prone to error as $\lambda$ increases. As $\lambda \rightarrow \infty$, $DG^{\lambda}$ becomes equivalent to $DG$. We thus experimentally study the behaviour of $DG^{\lambda}$ for increasing values of $\lambda$. We compute $DG^{\lambda}$ at the converged WGAN configurations (as shown in Fig \ref{fig:monitoring_gan_convergence}) for each of the three datasets by varying $\lambda$ in the range $[10^{-2},10^{6}]$. We observe (Figure \ref{fig:ablation_lamda}) that for all the datasets, $DG^{\lambda}$ remains close to zero and unaffected for  $\lambda$ in range $[10^{-2},1]$. Interestingly, for the MNIST dataset, $DG^{\lambda}$ remains unaffected even for larger values of $\lambda(\approx 10^4)$. This suggests that the WGAN configuration for MNIST is closer to a Nash equilibrium, also explaining why $DG$ and $DG^{\lambda}$ are closer for the same in Figure \ref{fig:monitoring_gan_convergence}. As $\lambda$ crosses a threshold ($10^{1}$ for CELEB-A, CIFAR-10 and $10^4$ for MNIST), $DG^{\lambda}$ increases sharply and behaves similar to $DG$. Thus, for a small value for $\lambda$ ($<1$) $DG^{\lambda}$ is a robust tool for monitoring GAN convergence.

\subsection*{Influencing GAN training using DG$^{\lambda}$}
\begin{figure}[h]
    \centering
    \includegraphics[width =0.85\linewidth]{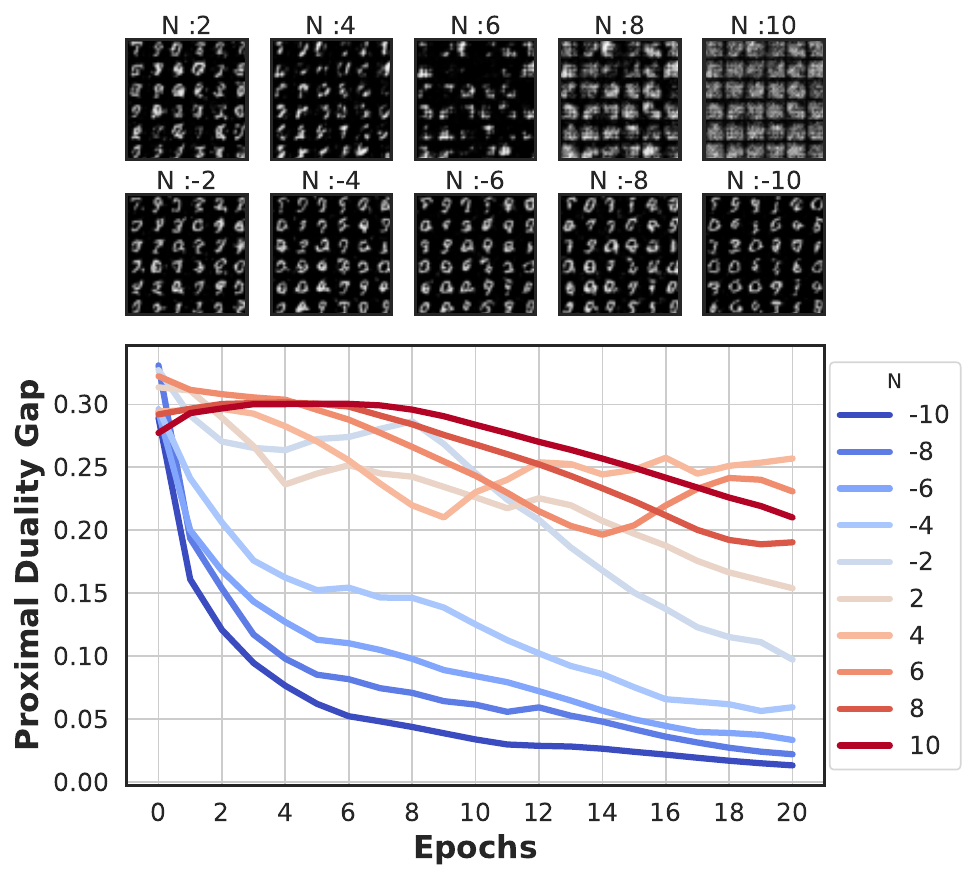}
    \caption{Tuning GAN Hyperparameter using $DG^{\lambda}$}
     \label{fig:hp_tune}
\end{figure}
A quantitative measure to monitor GAN training would enable easier tuning of hyperparameters. In this section, we explore $DG^{\lambda}$ as an effective tool for influencing GAN training. A decisive hyperparameter that governs the delicate balance of the GAN game and hence its convergence is the update ratio of the agents. Let us denote by $N$, the number of discriminator updates per generator update, where a negative value for $N$ indicates a larger number of generator updates. We train WGAN over the MNIST dataset by performing a grid search over $N$ in the range $-10$ to $10$ and computing $DG^{\lambda}$. Figure \ref{fig:hp_tune} depicts the qualitative output at the end of 20 epochs and $DG^{\lambda}$ across training for each value of $N$. We observe that as $N$ increases, the quality of the generated data samples diminishes and the learned data distribution eventually diverges as $N\rightarrow10$. Correspondingly, we observe that $DG^{\lambda}$ is close to zero for lower values of $N$(colored blue) and increases with $N$(colored red),  suggesting that the GAN diverges for larger values of $N$. $DG^{\lambda}$ thus enables us to quantitatively identify the optimal range of values for the hyperparameters of a GAN.

\section{Summary and Future Work }
GANs have pushed the boundaries of learning complex data distributions. However, the non-intuitive nature of GAN loss curves makes training a challenging task. We propose Proximal Duality Gap ($DG^{\lambda}$) as a generic and quantitative tool to monitor GAN training and understand GAN convergence. $DG^{\lambda}$ characterizes GAN convergence as the game attaining a $\lambda-$proximal equilibrium. It also helps derive insights into the nature of GAN convergence - a GAN learns the real data distribution if and only if it attains a Stackelberg equilibrium. The ability of $DG^{\lambda}$ to objectively quantify GAN convergence makes it a useful measure to tune the hyperparameters of a GAN. A couple of open questions that can improve the utility of $DG^{\lambda}$, if addressed, include identifying an optimal $\lambda$ and making the range of values for $DG^{\lambda}$ invariant across GAN formulations. The characterization of GAN convergence through proximal duality gap opens up new avenues for effortless GAN training.

\section*{Acknowledgements}
The resources provided by the \textit{PARAM Shivay} Facility under the National Super-computing Mission, Government of India at the Indian Institute of Technology, Varanasi and under Google Tensorflow Research award are gratefully acknowledged.

\bibliographystyle{icml2021}
\bibliography{output}

\end{document}


\twocolumn[
\icmltitle{Supplementary Material}

\icmlkeywords{Machine Learning, ICML}

\vskip 0.3in
]

This document presents a detailed discussion on the theorems, proofs, experimental observations and  setup left out in the main paper due to space constraints. 
    
    

\section{Background}
\subsection{Preliminaries and Notations}
Consider the GAN game defined by : 
\begin{equation}\label{eq:gan_game}
    \underset{\theta_g\in \Theta_G}{\min} \ \underset{\theta_d \in \Theta_D}{\max} \ V(D_{\theta_d},G_{\theta_g}) ,
\end{equation}
where the generator ($G$, parametrized by $\theta_g$) and discriminator ($D$, parametrized by $\theta_d$) are neural networks and $V$ is the objective function that the agents seek to optimize. Let us denote by $P_r$ the real data distribution and by $P_{\theta_g}$ the generated data distribution. We consider three different GAN formulations, each expressing the objective function ($V$) as summarized below:

\textit{Classic GAN}, defined by:
\begin{equation}
        \label{classic_gan_objective}
        V_c =  \dfrac{1}{2}\mathop{\mathbb{E}_{ \textbf{x} \sim P_{r}}}[\log D(\textbf{x})]   + \dfrac{1}{2}\mathop{\mathbb{E}_{\textbf{x} \sim P_{{\theta_g}}}}[\log (1-D(\textbf{x}))]
\end{equation}
\textit{F-GAN}, defined by:
\begin{equation}
    \label{f_gan_objective}
    V_f=\mathop{\mathbb{E}_{ \textbf{x} \sim P_{r}}}[D(\textbf{x})]   - \mathop{\mathbb{E}_{\textbf{x} \sim P_{{\theta_g}}}}[f^*(D(\textbf{x}))],
\end{equation} where $f^*$ denotes the Fenchel conjugate of a convex lower semi-continuous function $f$ satisfying $f(1)=0$.

\textit{WGAN}, defined by:
\begin{equation}
        \label{wgan_objective}
        V_w = \E_{\textbf{x} \sim {P}_{r}}[D(\textbf{x})] - \mathbb{E}_{\textbf{x} \sim {P}_{\theta_g}}[D^c(\textbf{x})],
\end{equation}
where $D^{c}$ denotes the $c-$transform of $D$.

We denote by $DIV(P_{\theta_g}||P_r)$ the divergence between the real and generated data distributions, defined for the three GAN formulations as below:
\begin{equation*}
    DIV(P_{\theta_g}||P_r)=
    \begin{cases}
      JSD(P_{\theta_g}||P_r), & \text{if}\ V=V_c \\
      W_c(P_{\theta_g}||P_r), & \text{if}\ V=V_w \\
      D_f(P_{\theta_g}||P_r), & \text{if}\ V=V_f \\
    \end{cases}
\end{equation*}
where $JSD$, $W_c$ and $D_f$ denotes the Jenson-Shannon Divergence, Wasserstein Distance (associated with a transport cost $c$) and $f-$divergence respectively. To theoretically study the properties of $DG^{\lambda}$, we assume that for a fixed generator, an optimal discriminator ($D_w$) that maximizes $V$ exists. 


\begin{figure*}[h]
    \centering
    \begin{tabular}{cc}
    \subfloat[SNGAN trained on MNIST]{\includegraphics[width = 0.33\linewidth]{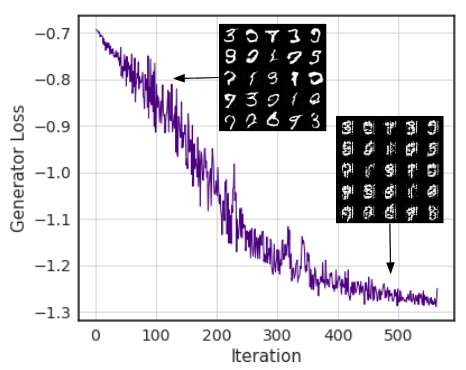}\includegraphics[width = 0.15\linewidth]{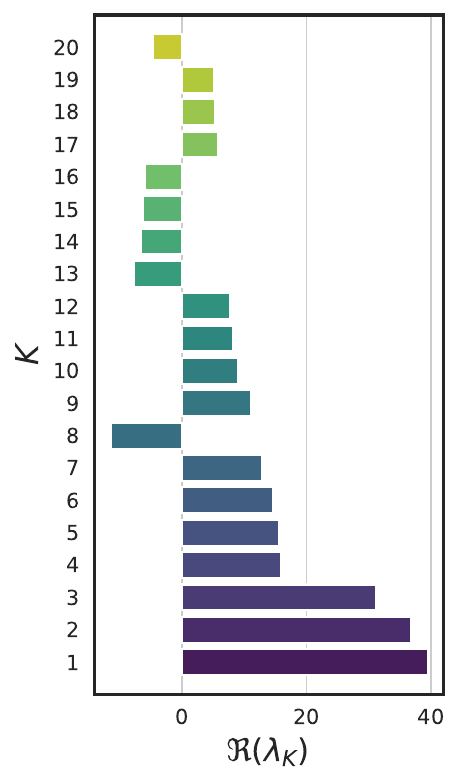}}&
    \subfloat[SNGAN trained on CELEB-A]{\includegraphics[width = 0.33\linewidth]{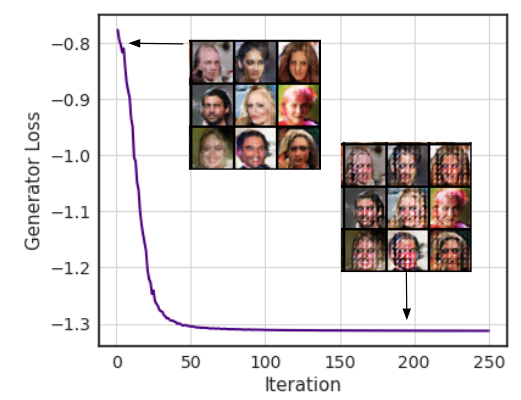}\includegraphics[width = 0.15\linewidth]{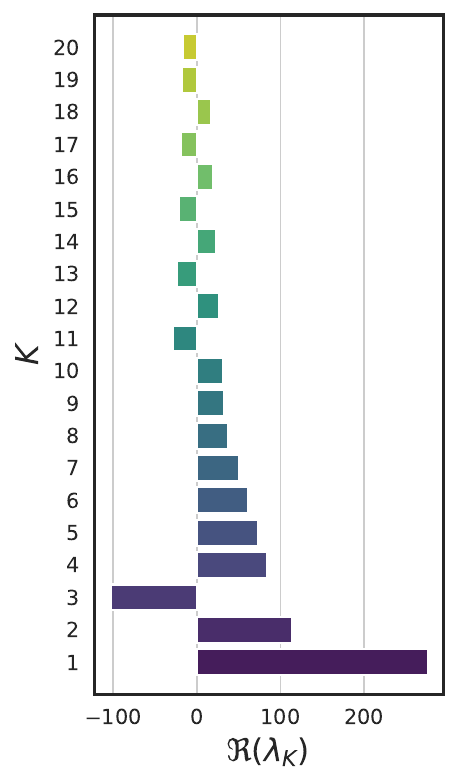}}
    
    \end{tabular}
    \caption{The high fidelity images  outputted by a converged GAN deteriorates on optimizing only w.r.t the generator while attaining a lower loss, indicating that the GAN has not converged to a Nash equilibrium; confirmed by the presence of positive and negative eigenvalues in the Hessian.}
    \label{fig:non_nash_demo}
\end{figure*}
\subsection{GANs Need Not Converge to Nash Equilibrium}
In this section, we provide further empirical evidence for the claim that \textit{GANs can produce realistic data even at non-Nash critical points}. Section 3.2 of the main paper presented results for an SNGAN trained over the CIFAR-10 dataset. Figure \ref{fig:non_nash_demo} demonstrates similar results for SNGAN over the MNIST and CELEB-A datasets. We observe that the converged GAN configurations do not exhibit the characteristics of a Nash equilibrium, despite producing high fidelity samples. A Nash equilibrium is optimal for both the agents, thus no agent can deviate from it to unilaterally improve its payoff. As the generator (discriminator) aims to minimize (maximize) the objective function, a Nash equilibrium would constitute a local minima (maxima) for the generator (discriminator). The hessian of the objective function w.r.t the generator (discriminator) would thus be positive (negative) definite. However, as depicted in Figure \ref{fig:non_nash_demo}, the hessian of the objective function w.r.t the generator is indefinite as it has both positive as well as negative eigenvalues, indicating that the configuration is not a local minima for the generator and thus not a Nash equilibrium. This is also verified by the visualization (Figure \ref{fig:non_nash_demo} that the generator is able to deviate from the converged configuration on unilaterally optimizing the objective function, attaining a lower loss, but deteriorating the quality of the learned data distribution.  

\section{Theorems and Proofs}


\subsection{Classical Duality Gap}
\begin{proposition}
\label{dg_zero_at_nash_only}
The duality gap (DG) for a GAN configuration will tend to zero only at a Nash equilibrium and is positive otherwise.
\end{proposition}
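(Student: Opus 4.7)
The plan is to treat this as a statement about the saddle-point game in \eqref{eq:gan_game} and show it is equivalent to the classical fact that the primal-dual gap vanishes exactly at a saddle point. I would first make the notion of duality gap explicit: for a configuration $(\theta_g,\theta_d)$,
\[
DG(\theta_g,\theta_d) \;=\; \max_{\theta_d' \in \Theta_D} V(D_{\theta_d'},G_{\theta_g}) \;-\; \min_{\theta_g' \in \Theta_G} V(D_{\theta_d},G_{\theta_g'}).
\]
I would also restate what a Nash equilibrium means in this min-max context: a point $(\theta_g^*,\theta_d^*)$ such that no unilateral deviation improves either agent's payoff, i.e.\ $V(D_{\theta_d^*},G_{\theta_g^*}) \leq V(D_{\theta_d^*},G_{\theta_g'})$ for every $\theta_g'$, and $V(D_{\theta_d^*},G_{\theta_g^*}) \geq V(D_{\theta_d'},G_{\theta_g^*})$ for every $\theta_d'$.

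Next I would prove non-negativity of $DG$ at every configuration, so that ``tends to zero'' is equivalent to ``attains zero''. This follows from the chain of inequalities
\[
\max_{\theta_d'} V(D_{\theta_d'},G_{\theta_g}) \;\geq\; V(D_{\theta_d},G_{\theta_g}) \;\geq\; \min_{\theta_g'} V(D_{\theta_d},G_{\theta_g'}),
\]
so $DG \geq 0$. For the easy direction, I would plug a Nash configuration $(\theta_g^*,\theta_d^*)$ into the definition: the Nash conditions say precisely that $\theta_d^*$ already attains the inner max at $\theta_g^*$ and that $\theta_g^*$ already attains the inner min at $\theta_d^*$, so both terms of $DG$ collapse to the common value $V(D_{\theta_d^*},G_{\theta_g^*})$, giving $DG=0$.

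The converse is the substantive content. I would assume $DG(\theta_g^*,\theta_d^*)=0$ and use the two inequalities above: since the left and right members are equal and sandwich $V(D_{\theta_d^*},G_{\theta_g^*})$, they must each equal $V(D_{\theta_d^*},G_{\theta_g^*})$. The first equality $\max_{\theta_d'}V(D_{\theta_d'},G_{\theta_g^*}) = V(D_{\theta_d^*},G_{\theta_g^*})$ immediately yields the Nash condition for the discriminator, and the second equality yields the Nash condition for the generator. Hence $(\theta_g^*,\theta_d^*)$ is a Nash equilibrium, and combined with the strict sign control from the previous step, $DG$ is strictly positive off the Nash set.

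The main obstacle is mostly conceptual rather than technical: one has to be careful that the inner $\max$ and $\min$ in the definition of $DG$ are taken over the full parameter spaces $\Theta_D,\Theta_G$ (matching the outer problem in \eqref{eq:gan_game}), because if they were local, the equivalence could fail at mere local saddle points. Since the paper uses the global min-max formulation, this is consistent, and no deeper argument (e.g.\ Sion's theorem or convexity of $V$) is required for the equivalence itself. I would flag this scope issue explicitly, since it is exactly what later motivates replacing $DG$ with a proximal variant.
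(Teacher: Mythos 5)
Your proposal is correct and follows essentially the same route as the paper: both reduce the claim to the equivalence between the Nash conditions and the equality $\max_{\tilde{\theta_d}} V(D_{\tilde{\theta_d}},G_{\theta_g}) = \min_{\tilde{\theta_g}} V(D_{\theta_d},G_{\tilde{\theta_g}})$, with $DG \geq 0$ following from the sandwich inequality through $V(D_{\theta_d},G_{\theta_g})$. You in fact spell out that sandwich step, which the paper merely asserts as an "equivalently," so your write-up is if anything slightly more complete.
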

\label{dg_zero_at_nash_only}
\begin{proof}
 A configuration $(\theta_d^*,\theta_g^*)$ of the GAN game (Eq \ref{eq:gan_game}) is called a Nash equilibrium if and only if $\forall  \ \theta_d, \theta_g$,
 \begin{align*}
     V(D_{\theta_d},G_{\theta_g^*}) \leq V(D_{\theta_d^*},G_{\theta_g^*})  \leq  V(D_{\theta_d^*},G_{\theta_g})
 \end{align*}
Equivalently, $\underset{\Tilde{\theta_d} \in \Theta_D}{\max} \ V(D_{ \Tilde{\theta_d}},G_{\theta_g^*}) = \underset{\Tilde{\theta_g} \in \Theta_G}{\min} \ V(D_{\theta_d^*},G_{\Tilde{\theta_g}})$ 

We have from the definition of duality gap ($DG$),
\begin{align*}
DG(\theta_d,\theta_g)=  \underset{\Tilde{\theta_d} \in \Theta_D}{\max} \ V(D_{ \Tilde{\theta_d}},G_{\theta_g}) - \underset{\Tilde{\theta_g} \in \Theta_G}{\min} \ V(D_{\theta_d},G_{\Tilde{\theta_g}}) 
\end{align*}
Thus, when $DG(\theta_d,\theta_g) = 0 $
\begin{align*}
 & \implies  \underset{\Tilde{\theta_d} \in \Theta_D}{\max}V(D_{ \Tilde{\theta_d}},G_{\theta_g}) = \underset{\Tilde{\theta_g} \in \Theta_G}{\min}V(D_{\theta_d},G_{\Tilde{\theta_g}})\\
& \implies (\theta_d,\theta_g) \  \text{ is a Nash equilibrium.}
\end{align*}
Similarly, when $ (\theta_d,\theta_g) $  is a Nash equilibrium,
\begin{align*}
 & \implies  \underset{\Tilde{\theta_d} \in \Theta_D}{\max}V(D_{ \Tilde{\theta_d}},G_{\theta_g}) = \underset{\Tilde{\theta_g} \in \Theta_G}{\min}V(D_{\theta_d},G_{\Tilde{\theta_g}})\\
& \implies DG(\theta_d,\theta_g) = 0 
\end{align*}
When $ (\theta_d,\theta_g) $ does not constitute a Nash equilibrium,
\begin{align*}
 & \implies  \underset{\Tilde{\theta_d} \in \Theta_D}{\max}V(D_{ \Tilde{\theta_d}},G_{\theta_g}) > \underset{\Tilde{\theta_g} \in \Theta_G}{\min}V(D_{\theta_d},G_{\Tilde{\theta_g}})\\
& \implies DG(\theta_d,\theta_g) > 0 
\end{align*}
\end{proof}
However, as discussed, GANs can converge to non-Nash critical points while producing data samples of high fidelity. The behaviour of $DG$ at such scenarios is not well understood, limiting its applicability as a tool for monitoring GAN training.
\subsection{Proximal Duality Gap}



\begin{proposition}
\label{lemma:proximal_lessthan_original}
Consider a GAN game governed by the objective function $V$. Then, for a configuration ($\theta_d,\theta_g$) the proximal objective $V^{\lambda}$ is related to $V$ as : 
\begin{align*}
    V^{\lambda}(\theta_d,\theta_g) &\leq V_{D_w}(\theta_g)
\end{align*}
\end{proposition}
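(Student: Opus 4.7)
The plan is to unpack the definition of the proximal objective $V^{\lambda}$ introduced earlier in the paper. Since $V^{\lambda}$ is constructed by adjoining a Moreau-style proximal penalty that keeps the generator perturbation anchored around the reference point $\theta_g$, the definition takes the form of an infimum over a perturbed argument $\tilde{\theta}_g$ of the discriminator's best-response value, augmented by a term proportional to $\lambda \| \tilde{\theta}_g - \theta_g \|^2$. In parallel, $V_{D_w}(\theta_g)$ denotes the value of $V$ at $\theta_g$ after the discriminator has been maximized out, that is, $\max_{\tilde{\theta}_d} V(D_{\tilde{\theta}_d}, G_{\theta_g})$.

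First, I would write out $V^{\lambda}(\theta_d, \theta_g)$ explicitly, making visible both the outer infimum over $\tilde{\theta}_g$ and the proximal regularizer. Next, I would evaluate this infimum at the feasible choice $\tilde{\theta}_g = \theta_g$: this zeroes out the proximal penalty and collapses the integrand to the discriminator best-response value at the reference point, which is precisely $V_{D_w}(\theta_g)$. Since an infimum over a set is upper bounded by the value of the objective at any particular element of that set, the desired inequality $V^{\lambda}(\theta_d, \theta_g) \leq V_{D_w}(\theta_g)$ follows immediately.

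The only real care needed is notational: one must confirm that $\tilde{\theta}_g = \theta_g$ is admissible in the definition of the proximal objective (which it is, since the generator parameter space is not further constrained) and that the comparison at this choice exactly recovers $V_{D_w}(\theta_g)$. Beyond that, the argument does not invoke any continuity, smoothness, or convexity of $V$, and in particular does not depend on which of the three GAN formulations is in play. I therefore do not expect a genuine obstacle; the proposition is essentially a one-line consequence of feasibility in the outer infimum, with the bulk of the write-up devoted to making the definitional bookkeeping transparent.
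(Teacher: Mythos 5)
There is a genuine gap: you have misidentified the definition of the proximal objective. In this paper $V^{\lambda}$ is not a Moreau envelope over generator perturbations; it is a penalized maximization over the \emph{discriminator},
\begin{equation*}
V^{\lambda}(\theta_d,\theta_g) \;=\; \max_{\tilde{\theta}_d \in \Theta_D}\; \Bigl\{\, V(D_{\tilde{\theta}_d},G_{\theta_g}) - \lambda \bigl\lVert D_{\tilde{\theta}_d} - D_{\theta_d}\bigr\rVert^2 \,\Bigr\},
\end{equation*}
with the anchor $\theta_d$ entering only through the penalty on the discriminator's deviation (measured in a Sobolev norm on the discriminator functions). There is no outer infimum over a perturbed generator $\tilde{\theta}_g$ at which to plug in $\tilde{\theta}_g = \theta_g$, so the feasibility step at the heart of your argument has nothing to act on; indeed, the object you describe would not even depend on $\theta_d$, which should have been a warning sign. (The minimization over generators only appears later, in $V^{\lambda}_{G_w}(\theta_d) = \min_{\theta_g'} V^{\lambda}(\theta_d,\theta_g')$, which is a different quantity.)

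The correct argument is just as short, but runs through monotonicity of the max rather than feasibility in an inf: since $\lambda\lVert D_{\tilde{\theta}_d} - D_{\theta_d}\rVert^2 \geq 0$, every candidate value $V(D_{\tilde{\theta}_d},G_{\theta_g}) - \lambda\lVert D_{\tilde{\theta}_d}-D_{\theta_d}\rVert^2$ is dominated by $V(D_{\tilde{\theta}_d},G_{\theta_g})$, and taking the maximum over $\tilde{\theta}_d$ on both sides yields $V^{\lambda}(\theta_d,\theta_g) \leq \max_{\tilde{\theta}_d} V(D_{\tilde{\theta}_d},G_{\theta_g}) = V_{D_w}(\theta_g)$. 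This is exactly the paper's proof. Your broader observations --- that the bound is a one-line definitional consequence requiring no continuity, convexity, or case analysis over the three GAN objectives --- remain valid once the definition is corrected.
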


\begin{proof}
Since $\lambda ||D_{\Tilde{\theta_d}} - D_{\theta_d}||^2 \geq 0$, we have
\begin{align*}
    V(D_{\Tilde{\theta_d}},G_{\theta_g})-\lambda ||D_{\theta_d} - D_{\Tilde{\theta_d}}||^2 &\leq V(D_{\Tilde{\theta_d}},G_{\theta_g}) \\
    \implies V^{\lambda}(\theta_d,\theta_g) &\leq \max_{\Tilde{\theta_d}} V(D_{\Tilde{\theta_d}},G_{\theta_g})\\
    &= V_{D_w}(\theta_g)
\end{align*}
\end{proof}

\begin{lemma}
\label{lemma:m1}
Given a generator $\theta_g$, $V_{D_w}$ is related to the divergences between $P_r$ and $P_{\theta_g}$ in the various GAN objectives as follows
\begin{equation*}
    V_{D_w}(\theta_g)=
    \begin{cases}
      JSD(P_{\theta_g}||P_r) - \log 2, & \text{if}\ V=V_c \\
      W_c(P_{\theta_g}||P_r), & \text{if}\ V=V_w \\
      D_f(P_{\theta_g}||P_r), & \text{if}\ V=V_f \\
    \end{cases}
\end{equation*}
\end{lemma}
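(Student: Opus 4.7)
The plan is to handle the three GAN formulations separately, since the relevant divergence identity is specific to each objective. In each case, the key step is the maximization over the discriminator: since $V_{D_w}(\theta_g) = \max_{\tilde\theta_d} V(D_{\tilde\theta_d}, G_{\theta_g})$, I would (implicitly) assume the discriminator class is rich enough that an optimal (unconstrained) $D^*$ is realizable, and then substitute it back into $V$ to recognize the resulting expression as the relevant divergence.

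For the classic GAN case with $V = V_c$, I would first rewrite $V_c$ as an integral against $P_r$ and $P_{\theta_g}$ and differentiate pointwise with respect to $D(x)$ to obtain the optimal discriminator $D^*(x) = P_r(x)/(P_r(x) + P_{\theta_g}(x))$, which is the textbook Goodfellow derivation. Plugging $D^*$ back in, each term would be manipulated by adding and subtracting a $\log 2$ so that the integrands become $P_r \log\bigl(P_r / \tfrac{1}{2}(P_r+P_{\theta_g})\bigr)$ and $P_{\theta_g} \log\bigl(P_{\theta_g} / \tfrac{1}{2}(P_r+P_{\theta_g})\bigr)$. These are the two KL divergences that define $JSD$, so the sum collapses to $2\,JSD(P_{\theta_g}\Vert P_r) - \log 4$, as claimed.

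For the WGAN case with $V = V_w$, the statement $V_{D_w}(\theta_g) = W_c(P_{\theta_g}\Vert P_r)$ is just the Kantorovich duality in its $c$-transform form: once $D$ is free to vary and $D^c$ is automatically its $c$-transform, $\sup_D\bigl(\mathbb{E}_{P_r}[D] - \mathbb{E}_{P_{\theta_g}}[D^c]\bigr)$ is the dual formulation of the optimal transport cost. I would simply cite this and note that the supremum is attained by a $c$-concave potential. For the f-GAN case with $V = V_f$, I would invoke the variational (Fenchel) characterization of $f$-divergences: for any convex lower semi-continuous $f$ with $f(1)=0$, $D_f(P_{\theta_g}\Vert P_r) = \sup_T \bigl(\mathbb{E}_{P_r}[T(x)] - \mathbb{E}_{P_{\theta_g}}[f^*(T(x))]\bigr)$, and the right-hand side is precisely $V_{D_w}(\theta_g)$ under $V_f$.

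The only real obstacle is the implicit capacity assumption: the three identities hold exactly only when the discriminator parameterization is expressive enough for the supremum to be attained by the theoretically optimal $D^*$ (sigmoid-of-log-ratio for $V_c$, a $c$-concave potential for $V_w$, a Fenchel-type witness function for $V_f$). I would state this nonparametric-discriminator assumption up front so that the case analysis reduces to three short substitutions plus two citations of standard duality results, with no further calculation required.
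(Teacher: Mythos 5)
Your proposal is correct and follows essentially the same route as the paper: maximize over an unconstrained discriminator case by case, with the Goodfellow first-order-condition calculation for $V_c$ and the recognition of the Kantorovich dual and the Fenchel variational representation of $f$-divergences for $V_w$ and $V_f$ respectively. The only difference is one of presentation --- where you cite the standard duality results for the WGAN and f-GAN cases, the paper re-derives them explicitly (showing attainability of the $c$-transform bound via a constant discriminator, and working through the identity $f^*(f'(t)) = t f'(t) - f(t)$) --- and your explicit statement of the nonparametric-discriminator capacity assumption is something the paper leaves implicit.
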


\begin{proof}
We divide the proof into three parts corresponding to each of the three GAN formulations.
\setcounter{casep}{0}
\begin{casep}
\textit{Classic GAN}\\
Consider the classic GAN objective $V=V_c$. We have,
\begin{align*}
V &= \dfrac{1}{2}\mathop{\mathbb{E}_{ x \sim P_{r}}}[\log D(x)]   + \dfrac{1}{2}\mathop{\mathbb{E}_{x \sim P_{{\theta_g}}}}[\log (1-D(x))]\\
&= \dfrac{1}{2}\int P_r(x) \log D(x) dx + \dfrac{1}{2}\int P_{\theta_g}(x) \log (1-D(x)) dx 
\end{align*}
We have $V_{D_w} = \underset{D}{\max} \ V $. The worst case discriminator $D_w$ can be obtained by differentiating $V$ w.r.t $D$ for every $x$ and equating to zero. This gives:
\begin{align*}
D_w(x) = \dfrac {P_r(x)}{P_{\theta_g}(x)+P_r(x)}
\end{align*}
Substituting $D_w$ back into $V$ gives,
\begin{align*}
V_{D_w}  =& \dfrac{1}{2}\int P_r(x) \log \left( \dfrac{P_r}{P_r+P_{\theta_g}}\right) dx \\
                 & \ \ \ \ \ \ \ \ + \dfrac{1}{2}\int P_{\theta_g}(x) \log \left( \dfrac{P_{\theta_g}}{P_r+P_{\theta_g}}\right) dx \\ =& \ JSD(P_{\theta_g}||P_r) - \log2
\end{align*}
\end{casep}

\begin{casep}
\textit{Wasserstein GAN}\\
Consider the Wasserstein GAN objective $V=V_w$. We have, 
\begin{align*}
V &= \E_{x \sim {P}_{r}}[D(x)] - \mathbb{E}_{x \sim {P}_{\theta_g}}[D^c(x)]
\end{align*}
where, $D^{c}(x)$ is related to $D(x)$ as 
\begin{align*}
D^c(x) &= \underset{x'}{\sup} \  \{ \  D(x') - c(x,x') \ \} \\
&\geq D(x) - c(x,x) \\
&\geq D(x)
\end{align*}
Substituting for $D^c(x)$ into $V$, we have
\begin{align*}
V &\leq \E_{x \sim {P}_{r}}[D(x)] - \mathbb{E}_{x \sim {P}_{\theta_g}}[D(x)]
\end{align*}
Thus, if $\exists$ a $c-$concave function $D_w$ such that $D_w(x)=D^{c}_w(x)$, then the bound is attainable and we have,
\begin{align*}
V_{D_w} &= \underset{D \ c-concave}{\max} \  V  =  \underset{D \ c-concave}{\sup} \  V \\
&= \E_{x \sim {P}_{r}}[D_w(x)] - \mathbb{E}_{x \sim {P}_{\theta_g}}[D_w(x)]
\end{align*}
Consider a constant discriminator $D_{constant}(x)$ = $k$, which by definition satisfies $c-$concavity. We have,
\begin{align*}
D^c_{constant}(x) &= \underset{x'}{\sup} \  \{ \  D_{constant}(x') - c(x,x') \ \} \\
&= \underset{x'}{\sup} \  \{ \  k - c(x,x') \ \} \\
&=   k + \underset{x'}{\sup} \  \{ \ - c(x,x') \ \} \\
&= k = D_{constant}(x)
\end{align*}
Thus, for $D_w=D_{constant}$ the bound is attainable and we have,
\begin{align*}
V_{D_w} &= \underset{D \ c-concave}{\max} \  V  \\
&=  \underset{D \ c-concave}{\sup} \  \E_{x \sim {P}_{r}}[D(x)] - \mathbb{E}_{x \sim {P}_{\theta_g}}[D^c(x)] \\
&= W_c(P_{\theta_g}||P_r)
\end{align*}
\end{casep}

\begin{casep}
\textit{F-GAN}\\
Consider the F-GAN objective $V=V_f$. We have,
\begin{align*}
    V=\mathop{\mathbb{E}_{x \sim P_{r}}}[D(x)]   - \mathop{\mathbb{E}_{x \sim P_{{\theta_g}}}}[f^*(D(x))],
\end{align*}
where $f^*$ is the Fenchel conjugate of a convex function $f$ defined by $f^*(x)= \underset{t}{\max} \  \{ \ xt -f(t) \  \}$ . The maximum implied by $f^*$ can be obtained by differentiating it w.r.t $t$ and equating to zero. This gives
\begin{align*}
 x - f'(t) =& \ 0\\
\implies x =& \ f'(t)
\end{align*}
On Substituting the value of $x$ in $f^*(x)$, we get that $f^*$ satisfies the property
\begin{align}
    \label{fenchel_conjugate_of_derivative_property}
    f^*(f'(t))= tf'(t)- f(t)
\end{align}

We have $V_{D_w} = \underset{D}{\max} \ V $. The worst case discriminator $D_w$ can be obtained by differentiating $V$ w.r.t $D$ for every $x$ and equating to zero. This gives:
\begin{align*}
D_w(x) &= f^{*'-1}\left(\dfrac{P_r(x)}{P_{\theta_g}(x)}\right)
\end{align*}
Substituting $D_w$ back into $V$ we get,
\begin{align*}
V_{D_w} = \int & P_r(x) f^{*'-1}\left(\dfrac{P_r(x)}{P_{\theta_g}(x)}\right)dx  \\ & - \int P_{\theta_g}(x) f^*\left(f^{*'-1}\left(\dfrac{P_r(x)}{P_{\theta_g}(x)}\right) \right)dx
\end{align*}
The Fenchel conjugate $f^*$ of a convex function $f$ satisfies $f^{*'-1} = f'$. Thus, substituting for $f^{*'-1}$ in $V_{D_w}$, we get,
\begin{align*}
V_{D_w} &= \int P_r(x) f^{'}\left(\dfrac{P_r(x)}{P_{\theta_g}(x)}\right)dx  \\ & \hspace{30pt} - \int P_{\theta_g}(x) f^*\left(f^{'}\left(\dfrac{P_r(x)}{P_{\theta_g}(x)}\right) \right)dx \\
&= \int P_r(x) f^{'}\left(\dfrac{P_r(x)}{P_{\theta_g}(x)}\right)dx  \\ & \hspace{30pt} - \int P_{\theta_g}(x) \left( \dfrac{P_r(x)}{P_{\theta_g}(x)}f^{'}\left(\dfrac{P_r(x)}{P_{\theta_g}(x)}\right) \right) dx \\ & \hspace{30pt} + \int P_{\theta_g}(x) \left( f\left(\dfrac{P_r(x)}{P_{\theta_g}(x)}\right) \right) dx  \ \ \  \text{(using \ref{fenchel_conjugate_of_derivative_property})} \\
&= \int P_{\theta_g}(x)f\left(\dfrac{P_r(x)}{P_{\theta_g}(x)}\right) dx \\
&=  D_f(P_{\theta_g}||P_r)
\end{align*}
\end{casep}
\end{proof}

\begin{theorem}
\label{pdg_lower_bound}

Consider a GAN game governed by an objective function $V$. Then the proximal duality gap ($DG^\lambda$) at a configuration ($\theta_d,\theta_g$) is related to the  divergence between the real ($P_r$) and generated ($P_{\theta_g}$) data distributions as follows.
\begin{equation*}
    DG^{\lambda}(\theta_d,\theta_g) \geq DIV(P_{\theta_g}||P_r) - \kappa
\end{equation*}
where,
\begin{equation*}
    DIV(P_{\theta_g}||P_r)=
    \begin{cases}
      JSD(P_{\theta_g}||P_r), & \text{if}\ V=V_c \\
      W_c(P_{\theta_g}||P_r), & \text{if}\ V=V_w \\
      D_f(P_{\theta_g}||P_r), & \text{if}\ V=V_f \\
    \end{cases}
\end{equation*}
 and $\kappa \ (\geq0)$ denotes the minimum divergence that the considered class of generator functions can achieve with the real data distribution.

\end{theorem}
\begin{proof}
We divide the proof into three parts corresponding to each of the the three GAN formulations. For each GAN formulation, we denote by $\kappa$ the minimum divergence that the considered class of generator functions can attain with the real data distributions; $\min_{P_{\theta_g}} \ DIV(P_{\theta_g}||P_r)= \kappa$. Note that under the realizable setting, $\kappa = 0$ as $\exists \theta_g$ such that $P_{\theta_g} = P_r$. 

\setcounter{casep}{0}
\begin{casep}
\textit{Classic GAN}, \\
Consider the classic GAN objective $V=V_c$. We have, $V_{D_w}(\theta_g) = JSD(P_{\theta_g}||P_r) - \log2$ \hfill (lemma \ref{lemma:m1})

Further, $V^{\lambda}(\theta_d,\theta_g) \leq V_{D_w}(\theta_g)$ \hfill (proposition \ref{lemma:proximal_lessthan_original})\\
 $\implies V^{\lambda}(\theta_d,\theta_g) \leq JSD(P_{\theta_g}||P_r) - \log2$\\
Thus, using a slight misuse of notation, we have
\begin{align*}
    V_{G_w}^{\lambda}(\theta_d) &= \min_{\theta_g^{'}} \ V^{\lambda}(\theta_d,\theta_g^{'})\\
    &\leq \min_{P_{\theta_g^{'}}} \  JSD(P_{\theta_g^{'}}||P_r) - \log2\\
    &= \kappa - \log2 \\
    \therefore DG^{\lambda}(\theta_d,\theta_g) &=  V_{D_w}(\theta_g) - V_{G_w}^{\lambda}(\theta_d)\\
    &\geq JSD(P_{\theta_g}||P_r) - \kappa
\end{align*}
\end{casep}

\begin{casep}
\textit{Wasserstein GAN}, \\
Consider the Wasserstein GAN objective $V=V_w$. We have, $V_{D_w}(\theta_g) = W_c(P_{\theta_g}||P_r)$ \hfill (lemma \ref{lemma:m1})

Further, $V^{\lambda}(\theta_d,\theta_g) \leq V_{D_w}(\theta_g)$ \hfill (proposition \ref{lemma:proximal_lessthan_original})\\
 $\implies V^{\lambda}(\theta_d,\theta_g) \leq W_c(P_{\theta_g}||P_r)$\\
Thus, using a slight misuse of notation, we have
\begin{align*}
    V_{G_w}^{\lambda}(\theta_d) &= \min_{\theta_g^{'}} \ V^{\lambda}(\theta_d,\theta_g^{'})\\
    &\leq \min_{P_{\theta_g^{'}}} \  W_c(P_{\theta_g^{'}}||P_r)\\
    &= \kappa \\
    \therefore DG^{\lambda}(\theta_d,\theta_g) &=  V_{D_w}(\theta_g) - V_{G_w}^{\lambda}(\theta_d)\\
    &\geq W_c(P_{\theta_g}||P_r) - \kappa
\end{align*}
\end{casep}

\begin{casep}
\textit{F-GAN},\\
Consider the F-GAN objective $V=V_f$. We have, $V_{D_w}(\theta_g) = D_f(P_{\theta_g}||P_r)$ \hfill (lemma \ref{lemma:m1})

Further, $V^{\lambda}(\theta_d,\theta_g) \leq V_{D_w}(\theta_g)$ \hfill (proposition \ref{lemma:proximal_lessthan_original})\\
 $\implies V^{\lambda}(\theta_d,\theta_g) \leq D_f(P_{\theta_g}||P_r)$\\
Thus, using a slight misuse of notation, we have
\begin{align*}
    V_{G_w}^{\lambda}(\theta_d) &= \min_{\theta_g^{'}} \ V^{\lambda}(\theta_d,\theta_g^{'})\\
    &\leq \min_{P_{\theta_g^{'}}} \  D_f(P_{\theta_g^{'}}||P_r)\\
    &= \kappa \\
    \therefore DG^{\lambda}(\theta_d,\theta_g) &=  V_{D_w}(\theta_g) - V_{G_w}^{\lambda}(\theta_d)\\
    &\geq D_f(P_{\theta_g}||P_r) - \kappa
\end{align*}
\end{casep}
\end{proof}















 






\begin{theorem}
\label{thm:pdg_zero_stackelberg}
The proximal duality gap ($DG^{\lambda}$) at a configuration  ($\theta_d^*,\theta_g^*$) for the  GAN game defined by $V_c, V_w,$ or $V_f$ is equal to zero for $\lambda=0$, when the generator learns the real data distribution .i.e,  $P_{\theta_g^*}=P_r \implies DG^{\lambda=0}(\theta_d^*,\theta_g^*) = 0$.
\end{theorem}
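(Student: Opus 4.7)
The plan is to observe that at $\lambda=0$ the proximal penalty $\lambda\|D_{\tilde\theta_d} - D_{\theta_d}\|^2$ vanishes, so $V^\lambda$ degenerates to the worst-case discriminator value and the gap collapses to a difference of divergences. First I would note that
\[
    V^{\lambda=0}(\theta_d,\theta_g) \;=\; \max_{\tilde\theta_d}\; V(D_{\tilde\theta_d},G_{\theta_g}) \;=\; V_{D_w}(\theta_g),
\]
which in particular has no dependence on $\theta_d$. Consequently $V_{G_w}^{\lambda=0}(\theta_d) = \min_{\tilde\theta_g} V_{D_w}(\tilde\theta_g)$, and
\[
    DG^{\lambda=0}(\theta_d,\theta_g) \;=\; V_{D_w}(\theta_g) \;-\; \min_{\tilde\theta_g} V_{D_w}(\tilde\theta_g).
\]

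Next I would apply Lemma \ref{lemma:m1} in each of the three formulations to rewrite $V_{D_w}$ as a divergence: for $V=V_c$ it is $2JSD(P_{\theta_g}\|P_r) - \log 4$; for $V=V_w$ it is $W_c(P_{\theta_g}\|P_r)$; and for $V=V_f$ it is $D_f(P_{\theta_g}\|P_r)$. Each of these divergences is non-negative and vanishes when the two distributions coincide, so at the configuration $P_{\theta_g^*}=P_r$ we immediately get $V_{D_w}(\theta_g^*) = -\log 4,\ 0,\ 0$ respectively.

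Finally, by the learnability assumption $\min_{P_{\theta_g'}} DIV(P_{\theta_g'}\|P_r)=0$, the minimum $\min_{\tilde\theta_g} V_{D_w}(\tilde\theta_g)$ is attained and equals precisely the same constant in each case (the additive $-\log 4$ is preserved in the classic case since $JSD \ge 0$). Subtracting then yields $DG^{\lambda=0}(\theta_d^*,\theta_g^*)=0$ for all three GAN objectives.

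The argument is essentially a one-line consequence of Lemma \ref{lemma:m1} once the $\lambda=0$ degeneracy of $V^\lambda$ is recognized, so there is no real obstacle. The only bookkeeping care needed is carrying the additive $-\log 4$ consistently through both $V_{D_w}(\theta_g^*)$ and $\min_{\tilde\theta_g} V_{D_w}(\tilde\theta_g)$ in the classic GAN case so that it cancels; it is also worth flagging that the conclusion actually holds for every $\theta_d$, not just the specific $\theta_d^*$ appearing in the statement, since the dependence on $\theta_d$ disappears when the proximal weight is zero.
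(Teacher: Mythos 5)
Your proof is correct and follows essentially the same route as the paper's: recognize that at $\lambda=0$ the proximal objective collapses to $V_{D_w}$, invoke Lemma \ref{lemma:m1} to express $V_{D_w}$ as a divergence, and use the learnability assumption to evaluate the inner minimum so the two terms cancel (including the $-\log 4$ in the classic case). The only difference is presentational — you give a unified argument where the paper writes out the three formulations separately — and your remark that the result holds for every $\theta_d$ is a correct observation the paper leaves implicit.
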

\begin{proof}
Let $(\theta_d^*,\theta_g^*)$ be a GAN configuration such that $P_{\theta_g^*}=P_r$. Since this is a realizable setting, $\min_{P_{\theta_g}} \ DIV(P_{\theta_g}||P_r)= 0$. We divide the proof into three parts corresponding to each of the the three GAN formulations.
\setcounter{casep}{0}
\begin{casep}
\textit{Classic GAN}, \\
Consider the classic GAN objective $V=V_c$. We have,
\begin{align*}
 V_{D_w}(\theta_g^*) &= JSD(P_{\theta_g^*}||P_r) - \log2 \ \ \ \ \  \text{(lemma \ref{lemma:m1})} \\
   &=  - \log2  \ \ \ \  (\because P_{\theta_g^*} = P_r)\\ 
    V_{G_w}^{\lambda=0}(\theta_d^*) &= \min_{\theta_g^{'}} \ V^{\lambda=0}(\theta_d^*,\theta_g^{'})\\
    &= \min_{P_{\theta_g^{'}}} \ \max_{\Tilde{\theta_d}} \  V(\Tilde{\theta_d},\theta_g^{'})\\
    &= \min_{P_{\theta_g^{'}}} \ V_{D_w}(\theta_g^{'}) \\
    &= \min_{P_{\theta_g^{'}}} \ JSD(P_{\theta_g^{'}}||P_r) - \log2 \\
    &= - \log2 \\
\therefore DG^{\lambda=0}(\theta_d^*,\theta_g^*) &= V_{D_w}(\theta_g^*) - V_{G_w}^{\lambda=0}(\theta_d^*) \\
&= \ 0
\end{align*}
\end{casep}

\begin{casep}
\textit{Wasserstein GAN}, \\
 Consider the Wasserstein GAN objective $V=V_w$. We have,
\begin{align*}
 V_{D_w}(\theta_g^*) &=  W_c(P_{\theta_g^*}||P_r) \ \ \ \ \ \ \   \text{(lemma \ref{lemma:m1})} \\
   &=  0  \ \ \ \  (\because P_{\theta_g^*} = P_r)\\ 
    V_{G_w}^{\lambda=0}(\theta_d^*) &= \min_{\theta_g^{'}} \ V^{\lambda=0}(\theta_d^*,\theta_g^{'})\\
    &= \min_{P_{\theta_g^{'}}} \ \max_{\Tilde{\theta_d}} \  V(\Tilde{\theta_d},\theta_g^{'})\\
    &= \min_{P_{\theta_g^{'}}} \ V_{D_w}(\theta_g^{'}) \\
    &= \min_{P_{\theta_g^{'}}} \ W_c(P_{\theta_g^{'}}||P_r) \\
    &= 0 \\
\therefore DG^{\lambda=0}(\theta_d^*,\theta_g^*) &= V_{D_w}(\theta_g^*) - V_{G_w}^{\lambda=0}(\theta_d^*) \\
&= \ 0
\end{align*}
\end{casep}
\begin{casep}
\textit{F-GAN}, \\
 Consider the F-GAN objective $V=V_f$. We have,
\begin{align*}
 V_{D_w}(\theta_g^*) &=  D_f(P_{\theta_g^*}||P_r) \ \ \ \ \ \ \   \text{(lemma \ref{lemma:m1})} \\
   &=  0  \ \ \ \  (\because P_{\theta_g^*} = P_r)\\ 
    V_{G_w}^{\lambda=0}(\theta_d^*) &= \min_{\theta_g^{'}} \ V^{\lambda=0}(\theta_d^*,\theta_g^{'})\\
    &= \min_{P_{\theta_g^{'}}} \ \max_{\Tilde{\theta_d}} \  V(\Tilde{\theta_d},\theta_g^{'})\\
    &= \min_{P_{\theta_g^{'}}} \ V_{D_w}(\theta_g^{'}) \\
    &= \min_{P_{\theta_g^{'}}} \ D_f(P_{\theta_g^{'}}||P_r) \\
    &= 0 \\
\therefore DG^{\lambda=0}(\theta_d^*,\theta_g^*) &= V_{D_w}(\theta_g^*) - V_{G_w}^{\lambda=0}(\theta_d^*) \\
&= \ 0
\end{align*}
\end{casep}


\end{proof}
\begin{corollary}
For the GAN formulations defined by $V_c, V_w$ or $V_f$, the generator learns the real data distribution at a configuration ($\theta_d^*,\theta_g^*$) if and only if ($\theta_d^*,\theta_g^*$) constitutes a Stackelberg equilibrium.
\end{corollary}
\begin{proof}
Consider a configuration ($\theta_d^*,\theta_g^*$) for the GAN game defined by $V_c, V_w$ or $V_f$. We have,
\setcounter{casep}{0}
\begin{casep}
\textit{When $P_{\theta_g^*} = P_r$}, 
\begin{align*}
\text{Theorem \ref{thm:pdg_zero_stackelberg}} &\implies DG^{\lambda=0}(\theta_d^*,\theta_g^*) = 0 \\
&\implies (\theta_d^*,\theta_g^*) \in \text{Stackelberg Equilibria}
\end{align*}
\end{casep}
\begin{casep}
\textit{When $(\theta_d^*,\theta_g^*) \in $ Stackelberg Equilibria}, \\
From the definition of $DG^{\lambda}$ and Stackelberg Equilibrium,
\begin{align*}
&DG^{\lambda=0}(\theta_d^*,\theta_g^*) = 0 \\
&\implies DIV(P_{\theta_g^*}||P_r) \leq 0  \ \ \  (\text{Theorem \ref{pdg_lower_bound}})\\
&\implies DIV(P_{\theta_g^*}||P_r) = 0 \ \ \ (\because DIV \geq 0) \\
&\implies P_{\theta_g^*} = P_r
\end{align*}
\end{casep}
\end{proof}
\begin{theorem}
\label{thm:hierarchical_pdg}
Consider a GAN configuration $(\theta_d,\theta_g)$. Then,  $\forall \lambda^{'} \geq \lambda_{0}$,
\begin{equation*}
DG^{\lambda=\lambda^{'}}(\theta_d,\theta_g) = 0 \implies DG^{\lambda=\lambda_{0}}(\theta_d,\theta_g) = 0 
\end{equation*}
\end{theorem}
\begin{proof}
We know from the definition of $DG^{\lambda}$ that $DG^{\lambda}(\theta_d,\theta_g)=0$ is a necessary and sufficient condition for $(\theta_d,\theta_g)$ to be a $\lambda-$proximal equilibrium i.e. , $DG^{\lambda}(\theta_d,\theta_g) = 0$ implies that $\forall \  \theta_d^{'},\theta_g^{'}$,
\begin{align*}
    V(D_{\theta_d^{'}},G_{\theta_g}) & \leq V(D_{\theta_d},G_{\theta_g}) \\
      & \leq  \underset{\Tilde{\theta_d} \in \Theta_{D}}{\max} \ V(D_{\Tilde{\theta_d}},G_{\theta_g^{'}}) - \lambda ||D_{\Tilde{\theta_d}} - D_{\theta_d}||^2 
\end{align*}
and vice-versa.\\
Now, $\forall \lambda^{'} \geq \lambda_{0}$, the following holds 
\begin{align*}
    \underset{\Tilde{\theta_d} \in \Theta_{D}}{\max} \ & V(D_{\Tilde{\theta_d}},G_{\theta_g^{'}}) - \lambda^{'} ||D_{\Tilde{\theta_d}} - D_{\theta_d}||^2 \\
    &\leq \underset{\Tilde{\theta_d} \in \Theta_{D}}{\max} \ V(D_{\Tilde{\theta_d}},G_{\theta_g^{'}}) - \lambda_{0} ||D_{\Tilde{\theta_d}} - D_{\theta_d}||^2 
\end{align*}
Thus, $(\theta_d,\theta_g)$ is a $\lambda^{'}-$proximal equilibrium $\implies$ $(\theta_d,\theta_g)$ is also  a $\lambda_{0}-$proximal equilibrium. \\
$\therefore DG^{\lambda=\lambda^{'}}(\theta_d,\theta_g) = 0 \implies DG^{\lambda=\lambda_{0}}(\theta_d,\theta_g) = 0 $
\end{proof}

\begin{theorem}
\label{thm:pdg_upperbound}
Consider a GAN game governed by an objective function $V$. For $\lambda > 0$, let $V^{\lambda}$ denote the proximal objective defined by $V^{\lambda}(\theta_d,\theta_g)=max_{\Tilde{\theta_d}}V(\Tilde{\theta_d},\theta_g)-\lambda||D_{\Tilde{\theta_d}}-D_{\theta_d}||^2$ . Then, $\forall \ \epsilon > 0, \ \exists \ \delta > 0$ such that if $||D_{\theta_d}-D_{\Tilde{\theta_d}}||<\delta$, then $DG^{\lambda}(\theta_d,\theta_g) - DIV(P_{\theta_g}||P_r)<\epsilon$
where,
\begin{equation*}
    DIV(P_{\theta_g}||P_r)=
    \begin{cases}
      JSD(P_{\theta_g}||P_r), & \text{if}\ V=V_c \\
      W_c(P_{\theta_g}||P_r), & \text{if}\ V=V_w \\
      D_f(P_{\theta_g}||P_r), & \text{if}\ V=V_f \\
    \end{cases}
\end{equation*}
\end{theorem}
\begin{proof}
 We show that for all $\epsilon > 0$ and $\lambda > 0$, $\delta = \sqrt{\dfrac{\epsilon}{\lambda}}$ satisfies the claim. We provide the proof for F-GAN. The proof for the other GAN formulations follow on the same lines.  \\
Consider a configuration ($\theta_d,\theta_g$) for the GAN game defined by the F-GAN objective $V=V_f$. We have, $V_{D_w}(\theta_g) = D_f(P_{\theta_g}||P_r)$ \hfill (lemma \ref{lemma:m1})\\
Given that $||D_{\theta_d}-D_{\Tilde{\theta_d}}||<\delta$, we have
\begin{align*}
DG^{\lambda}(\theta_d,\theta_g) &- DIV(P_{\theta_g}||P_r) \\
    &=V_{D_w}(\theta_g) - V_{G_w}^{\lambda}(\theta_d) - D_f(P_{\theta_g}||P_r)\\ 
    & =D_f(P_{\theta_g}||P_r) - V_{G_w}^{\lambda}(\theta_d)\\  & \hspace{30pt} - D_f(P_{\theta_g}||P_r) \\ 
    &= - V_{G_w}^{\lambda}(\theta_d) \\ 
    &= -\min_{\theta_g^{'}} \ V^{\lambda}(\theta_d,\theta_g^{'})\\
    &= -\min_{\theta_g^{'}} \  \{ \max_{\Tilde{\theta_d}} V(\Tilde{\theta_d},\theta_g^{'})  \\ & \hspace{90pt} - \lambda||D_{\Tilde{\theta_d}}-D_{\theta_d}||^2\} \\
    &< -\min_{\theta_g^{'}} \  \{ \max_{\Tilde{\theta_d}} V(\Tilde{\theta_d},\theta_g^{'}) - \lambda \delta^2 \}\\
    &= -\min_{\theta_g^{'}} \  \{ V_{D_w}(\theta_g^{'}) \} + \lambda \delta^2\\
    &= -\min_{P_{\theta_g^{'}}} \  \{ D_f(P_{\theta_g^{'}}||P_r) \}  + \lambda \delta^2\\
    &= \lambda \delta^2\\
    &= \lambda \left(\sqrt{\dfrac{\epsilon}{\lambda}}\right)^2\\
    &= \epsilon
\end{align*}
\end{proof}
\section{$\mathbf{DG^{\lambda}}$ Estimation}
\begin{algorithm}
\SetAlgoLined
\SetKwBlock{Begin}{function}
\textbf{Input:} GAN configuration - ($\theta_d^t$, $\theta_g^t$), data points $x_i$\\
\Begin(\textit{prox\_opt}{(}$\theta_d,\theta_g${)}\ :){
$\Tilde{\theta_d} \longleftarrow \ \theta_d$\\
\For{j = 0 to T}{
$V^{\lambda} \longleftarrow V(\Tilde{\theta_d}, \theta_g ) - \frac{\lambda}{n} \sum_{i=1}^{n}||\nabla_{x}D_{\Tilde{\theta_d}}(x_i)-\nabla_{x}D_{\theta_{d}}(x_i)||_{2}^{2}$\\
$\Tilde{\theta_d} \longleftarrow \ \Tilde{\theta_d} + \eta \nabla_{\Tilde{\theta_d}} V^{\lambda}$}
\textbf{return} $\Tilde{\theta_d}, \ V^{\lambda}$}    
$\theta_d^w \longleftarrow \ \theta_d^t \ ; \ \theta_g^w \longleftarrow \ \theta_g^t $\\
\For{i = 0 to N\_ITER}{
$\theta_d^w \longleftarrow \ \theta_d^w + \eta \nabla_{\theta_d^w} V(\theta_d^w, \theta_g^t)$ \\
$\theta_d^*, V^{\lambda} \longleftarrow $\textit{prox\_opt}($\theta_d^t,\theta_g^w$)\\
$\theta_g^w \longleftarrow \ \theta_g^w - \eta \nabla_{\theta_g^w} V(\theta_d^*, \theta_g^w)$}
$V_{D_w} \longleftarrow V(\theta_g^t , \theta_d^{w})$\\
$\theta_d^*, V^{\lambda}_{G_w} \longleftarrow $ \textit{prox\_opt}$(\theta_d^t,\theta_g^{w})$\\
\textbf{return} $DG^{\lambda}(\theta_d^t,\theta_g^t)$ = $V_{D_w} - V^{\lambda}_{G_w}$\\
\caption{Proximal Duality Gap\ $DG^{\lambda}(\theta_d^t, \theta_g^t$)}
\label{alg:proximal_dg}
\end{algorithm}

Algorithm \ref{alg:proximal_dg} summarizes the estimation process for proximal duality gap. Given a configuration $(\theta_d,\theta_g)$ of the GAN, we estimate $V_{D_w}$ and $V^{\lambda}_{G_w}$ by optimizing the objective function w.r.t the individual agents using gradient descent. We have the proximal objective defined by,
\begin{equation*}
\label{proximal_objective}
V^{\lambda}(D_{\theta_d},G_{\theta_g}) = \underset{\Tilde{\theta_d} \in \Theta_{D}}{\max} \ V(D_{\Tilde{\theta_d}},G_{\theta_g}) - \lambda ||D_{\Tilde{\theta_d}} - D_{\theta_d}||^2
\end{equation*}
Following (Farnia \& Ozdaglar, 2020) we use the Sobolev norm in $V^{\lambda}$, given by 
\begin{equation*}
\label{sobolev}
||D|| = \sqrt{\mathbb{E}_{\textbf{x} \sim {P}_r}\Big[||\nabla_{\textbf{x}}D(\textbf{x})||^{2}_{2}\Big]} 
\end{equation*}
The estimation process for $DG^{\lambda}$ is similar to that of $DG$, except that the worst case generator for a given discriminator is computed w.r.t the proximal objective ($V^{\lambda}$). As depicted in Algorithm \ref{alg:proximal_dg}, the function \textit{prox\_opt} uses gradient ascent to estimate the proximal objective $V^{\lambda}$. Since $\lambda$ restricts the neighbourhood within which the discriminator is optimal in $V^{\lambda}$, the search space for the optimal discriminator increases as $\lambda$ decreases. Correspondingly, estimating $V^{\lambda}$ demands a larger number of gradient steps and becomes computationally infeasible as $\lambda \rightarrow 0$.

\begin{figure}[h]
    \begin{center}
     \includegraphics[width = 0.85 \linewidth]{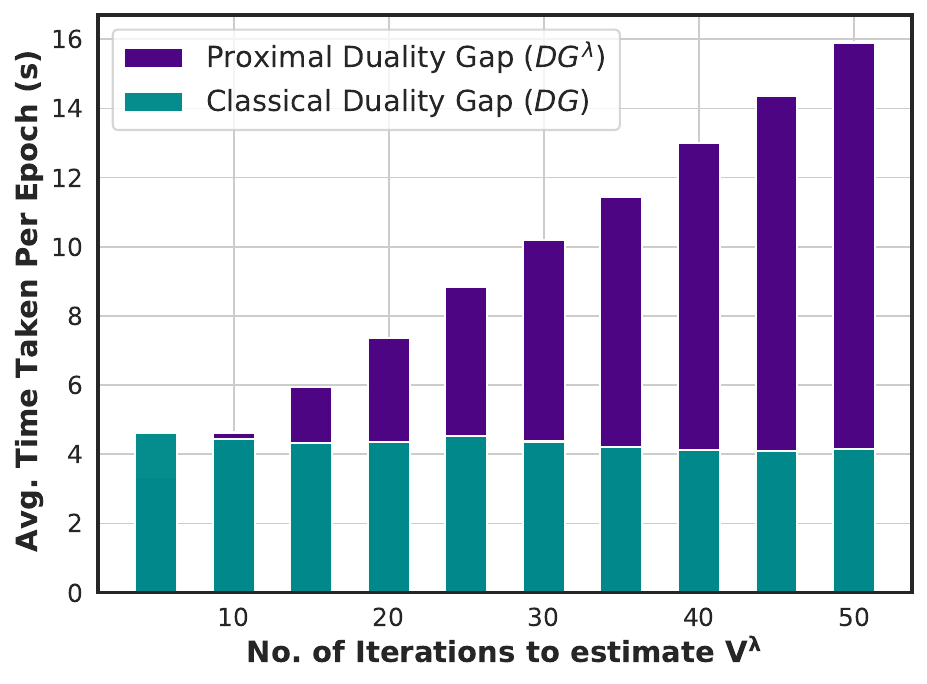}
    \end{center}
      \caption{Computational Complexity of $DG^{\lambda}$ over $DG$}
    \label{fig:complexity}
\end{figure}

We thus experimentally studied the computational overhead in estimating $DG^{\lambda}$ over $DG$. Figure \ref{fig:complexity} compares the average time taken per epoch to  estimate $DG$ and $DG^{\lambda}$ across varying gradient steps ($T$) to approximate $V^{\lambda}$. We observe that while $DG^{\lambda}$ has comparable computational complexity as $DG$ for smaller values of $T$, it increases rapidly for larger values of $T$. We observed that for $\lambda=0.1$, $\approx 20$ steps were sufficient for $V^{\lambda}$ to converge (in line with the observations of (Farnia \& Ozdaglar, 2020) ), incurring computational expense (Figure \ref{fig:complexity}) comparable to that demanded by $DG$.

\section{Experiments and Results}

 \begin{figure*}[h]
    \centering
    \begin{tabular}{cc}
    \subfloat[MNIST]{\includegraphics[width = 0.34\linewidth]{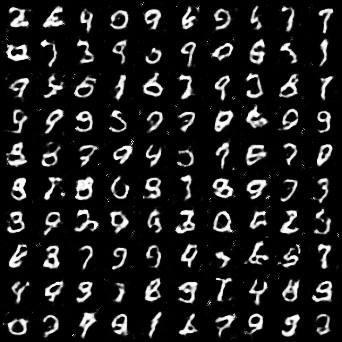} \ \includegraphics[width = 0.56\linewidth]{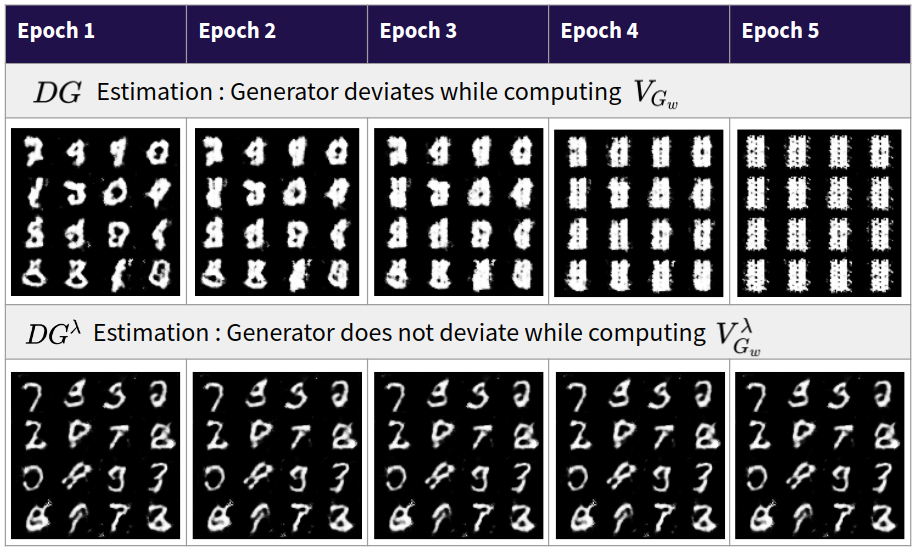}}\\ \\
    \subfloat[CIFAR-10]{\includegraphics[width = 0.34\linewidth]{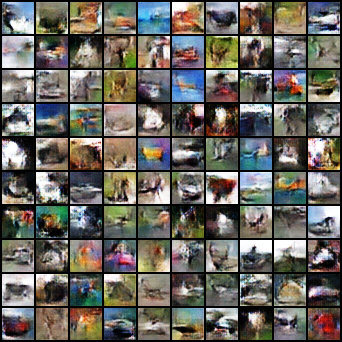} \ \includegraphics[width = 0.56\linewidth]{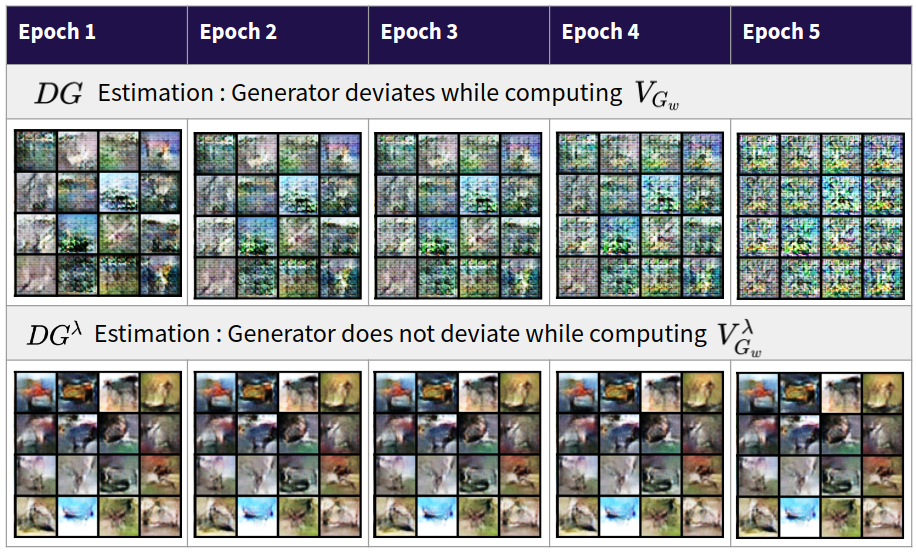}}\\ \\
    \subfloat[CELEB-A]{\includegraphics[width = 0.34\linewidth]{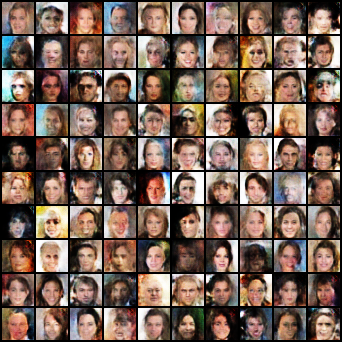} \ \includegraphics[width = 0.56\linewidth]{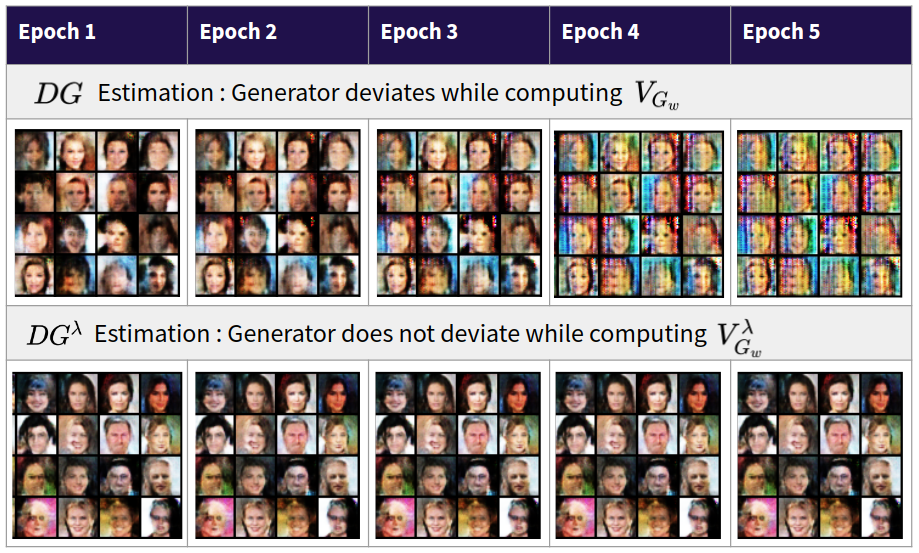}}\\ \\
    \end{tabular}
    \caption{Visualizing the behaviour the generator while estimating $DG$ and $DG^{\lambda}$ for a converged Wasserstein GAN (WGAN)  over the datasets- (a) MNIST (b) CIFAR-10 (c) CELEB-A.  The GAN has converged, validated by the high fidelity of generated data samples(left grid). However, the generator deviates on optimizing $V$, but remains stationary on optimizing $V^{\lambda}$, indicating that the converged configuration is a proximal equilibrium and not a Nash equilibrium.}
     \label{fig:wgan_output}
\end{figure*}

 \begin{figure*}[h]
    \centering
    
    
    \begin{tabular}{cc}
    \subfloat[MNIST]{\includegraphics[width = 0.34\linewidth]{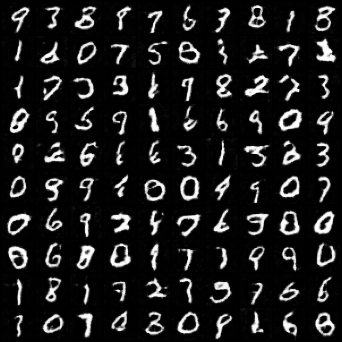} \ \includegraphics[width = 0.56\linewidth]{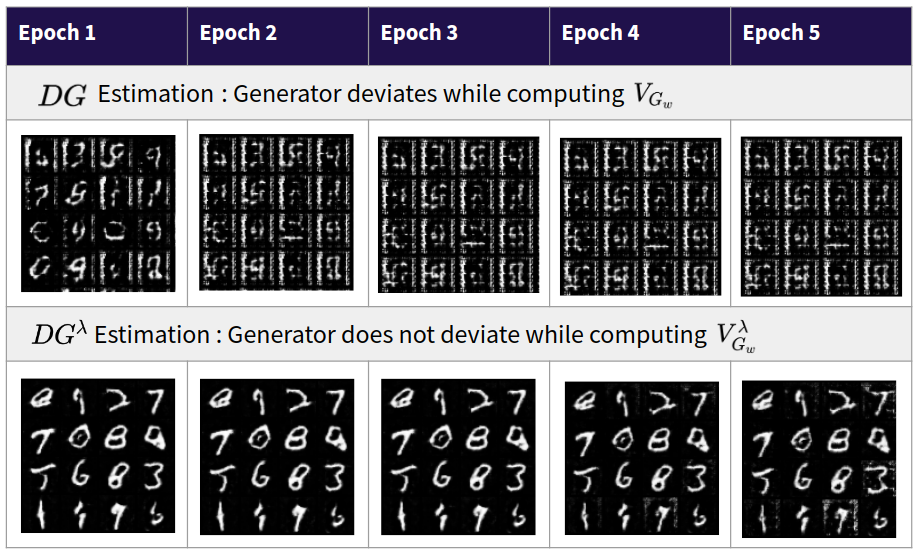}} \\
    \\ 
    \subfloat[CIFAR-10]{\includegraphics[width = 0.34\linewidth]{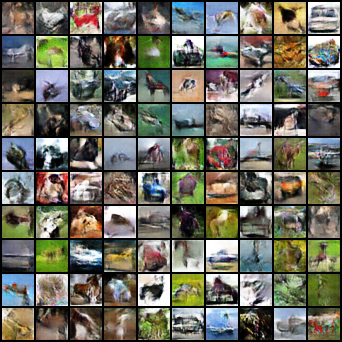} \ \includegraphics[width = 0.56\linewidth]{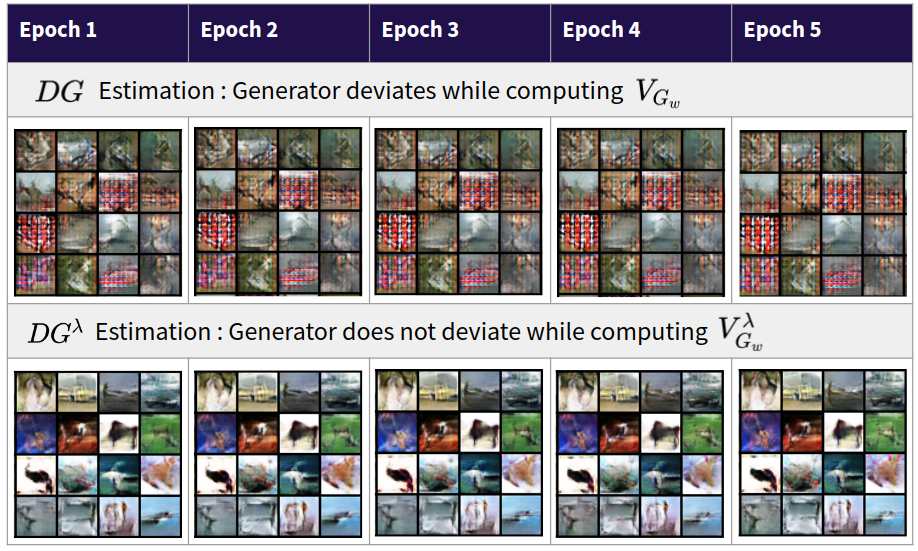}}\\ \\
    \subfloat[CELEB-A]{\includegraphics[width = 0.34\linewidth]{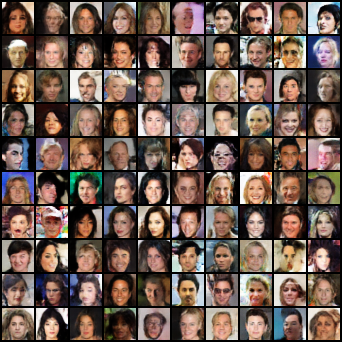} \ \includegraphics[width = 0.56\linewidth]{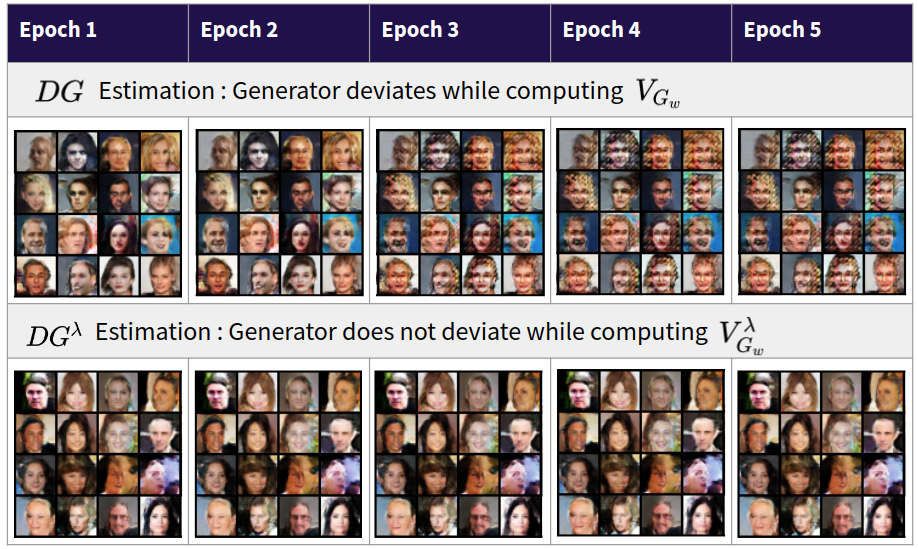}}\\ \\
    \end{tabular}
    \caption{Visualizing the behaviour the generator while estimating $DG$ and $DG^{\lambda}$ for a converged Spectral Norrmalized GAN (SNGAN) over the datasets- (a) MNIST (b) CIFAR-10 (c) CELEB-A.  The GAN has converged, validated by the high fidelity of generated data samples(left grid). However, the generator deviates on optimizing $V$, but remains stationary on optimizing $V^{\lambda}$, indicating that the converged configuration is a proximal equilibrium and not a Nash equilibrium.}
     \label{fig:sngan_output}
\end{figure*}

\begin{table}[h]
\centering
\begin{tabular}{@{}lcccc@{}}
\toprule
\textbf{}         & \multicolumn{4}{c}{\textbf{Pearson Correlation Coefficient (r)}}            \\ \midrule
                  & r$_{DG,IS}$ & r$_{DG^{\lambda},IS}$ & r$_{DG,FID}$ & r$_{DG^{\lambda},FID}$ \\ \midrule
\textit{MNIST}    & -0.104      & \textbf{-0.516}                & 0.207        &\textbf{0.942}                  \\
\textit{CIFAR-10} & -0.368      & \textbf{-0.463}                & 0.293        & \textbf{0.661}                \\
\textit{CELEB-A}  & -0.535      & \textbf{-0.846}                & 0.638        & \textbf{0.929}                 \\ \bottomrule
\end{tabular}%
\caption{Comparing the correlation of $DG$ and $DG^{\lambda}$ with IS and FID computed during the training of SNGAN over the 3 datasets.}
\label{sngan_correlation}
\end{table}

\subsection{Monitoring GAN Training Using $\mathbf{DG^{\lambda}}$}
In this section, we present further empirical observations and evidence to demonstrate the proficiency of proximal duality gap. Section 5.1 of the main paper presented the adeptness of $DG^{\lambda}$ over $DG$ to monitor GAN convergence, suggesting that the GANs have attained a proximal equilibrium and not a Nash equilibrium. We thus studied the behaviour of the converged GAN while estimating $DG$ and $DG^{\lambda}$. We visualized the variation in the generated data distribution  across epochs during the estimation of $V_{G_w}$ (for $DG$) and $V^{\lambda}_{G_w}$ (for $DG^{\lambda}$). The behaviours are demonstrated in Figure \ref{fig:wgan_output} for SNGAN and Figure \ref{fig:sngan_output} for WGAN across the three datasets - (a) MNIST, (b) CIFAR-10 and (c) CELEB-A. The high fidelity of generated data samples depicted in each subfigure suggest that the GANs have converged. However, on optimizing the objective function ($V$) unilaterally w.r.t the generator, it deviates (top row on the right) and deteriorates the quality of the learned data distribution, validating that the GAN has not attained a Nash equilibrium. However, the generator is unable to deviate (bottom row on the right) from the converged configuration w.r.t the proximal objective ($V^{\lambda}$) as the generated data distribution does not vary, indicating that the GANs have attained a proximal equilibrium. This explains why $DG^{\lambda}$ is able to better characterize convergence over $DG$ for each GAN. 

Table \ref{sngan_correlation} presents the correlation of $DG$ and $DG^{\lambda}$ with the popular quality evaluation measures IS and FID, across the training process of an SNGAN over each of the three datasets. We observe that $DG^{\lambda}$ has a higher correlation over $DG$ with each of the measures. Thus, further validating the claim that $DG^{\lambda}$ is adept to not only monitor convergence of the GAN game to an equilibrium, but also the goodness of the learned data distribution.

\subsection{Implementation and Hyperparameter Details}
We used the 4-layer DCGAN architecture for both the generator and the discriminator networks in all the experiments. We used an Adam optimizer to train and evaluate all the models. To compute $DG^{\lambda}$, we used $\lambda=0.1$ and $20$ optimization steps for approximating the proximal objective. To enforce the Lipschitz constraint in WGAN, we used weight clipping in the range $[-0.01,0.01]$. We used a batch size of $512,512,128$ for MNIST, CIFAR-10 and CELEB-A datasets respectively, where the input images were resized to be of shape $32\times32$. The latent space dimension for the generator was set to $100$.  To ensure that we obtain an unbiased estimate for $DG^{\lambda}$ and $DG$, we split each dataset into 3 disjoint sets - $S_A$, $S_B$ and $S_C$. We trained the GAN using $S_A$, we used $S_B$ to find the worst case counter parts $D_w$ and $G_w$ via gradient descent, and $S_C$ to evaluate the objective function at the obtained worst case configurations. For each dataset, we kept $5000$ samples each in $S_B$, $S_C$ and the rest in $S_A$. To estimate the worst case configurations, we optimized each agent unilarerally for 10 epochs over $S_B$. The learning rates for the discriminator ($LR_D$) and generator ($LR_G$), the value in multiples of which the DCGAN architecture steps up the convolutional features ($Step  \ Channels$) and the values of ($\beta_1,\beta_2$) used in the Adam optimizer for each of the datasets are summarized in Table \ref{hp_wgan} for WGAN and Table \ref{hp_sngan} for SNGAN.

\begin{table}[h]
\centering
\begin{tabular}{@{}lccccc@{}}
\toprule
\multicolumn{6}{c}{\textbf{SNGAN}}                                                                                                                                                                               \\ \midrule
                  & \multicolumn{1}{l}{$LR_D$} & \multicolumn{1}{l}{$LR_G$} & \begin{tabular}[c]{@{}c@{}}$Step$ \\ $Channels$\end{tabular} & \multicolumn{1}{l}{$\beta_1$} & \multicolumn{1}{l}{$\beta_2$} \\ \midrule
\textit{MNIST}    & $1e-4$                      & $2e-4$                      & $16$                                                       & $0.00$                                        & $0.999$                                       \\
\textit{CIFAR-10} & $1e-4$                      & $2e-4$                      & $64$                                                       & $0.00$                                        & $0.999$                                       \\
\textit{CELEB-A}  & $1e-4$                      & $2e-4$                      & $64$                                                       & $0.00$                                        & $0.999$                                       \\ \bottomrule
\end{tabular}
\caption{Hyperparameter values for SNGAN experiments.}
\label{hp_sngan}
\end{table}

\begin{table}[h]
\centering
\begin{tabular}{@{}lccccc@{}}
\toprule
\multicolumn{6}{c}{\textbf{WGAN}}                                                                                                                                                                               \\ \midrule
                  & \multicolumn{1}{l}{$LR_D$} & \multicolumn{1}{l}{$LR_G$} & \begin{tabular}[c]{@{}c@{}}$Step$ \\ $Channels$\end{tabular} & \multicolumn{1}{l}{$\beta_1$} & \multicolumn{1}{l}{$\beta_2$} \\ \midrule
\textit{MNIST}    & $4e-4$                      & $1e-4$                      & $16$                                                       & $0.50$                                        & $0.999$                                       \\
\textit{CIFAR-10} & $4e-4$                      & $1e-4$                      & $64$                                                       & $0.50$                                        & $0.999$                                       \\
\textit{CELEB-A}  & $4e-4$                      & $1e-4$                      & $64$                                                       & $0.50$                                        & $0.999$                                       \\ \bottomrule
\end{tabular}
\caption{Hyperparameter values for WGAN experiments.}
\label{hp_wgan}
\end{table}





  

































 
 








   
   
   
   
          
          

   
   
   


    





















































\bibliographystyle{icml2021}